\newtheorem{theorem}{Theorem}
\newtheorem{lemma}{Lemma}
\def\eqref#1{equation~\ref{#1}}
\def\1{\bm{1}}
\DeclareMathAlphabet{\mathsfit}{\encodingdefault}{\sfdefault}{m}{sl}
\SetMathAlphabet{\mathsfit}{bold}{\encodingdefault}{\sfdefault}{bx}{n}
\newcommand\norm[1]{\lVert#1\rVert}
\newcommand{\R}{\mathbb{R}}
\titlespacing{\section}{0pc}{0.3pc}{0.1pc}
\theoremstyle{plain}
\theoremstyle{remark}
\icmltitlerunning{G-RepsNet: A Fast and General Construction of Equivariant Networks for Arbitrary Matrix Groups}
\begin{document}

\twocolumn[
\icmltitle{G-RepsNet: A Fast and General Construction of Equivariant Networks for Arbitrary Matrix Groups}

\icmlsetsymbol{equal}{*}
\begin{icmlauthorlist}
\icmlauthor{Sourya Basu}{equal,yyy}
\icmlauthor{Suhas Lohit}{comp}
\icmlauthor{Matthew Brand}{comp}
\end{icmlauthorlist}
\icmlaffiliation{yyy}{University of Illinois at Urbana-Champaign}
\icmlaffiliation{comp}{Mitsubishi Electric Research Laboratories}

\icmlcorrespondingauthor{Suhas Lohit}{slohit@merl.com}

\vskip 0.3in
]

\printAffiliationsAndNotice{\icmlEqualContribution} 

\begin{abstract}
Group equivariance is a strong inductive bias useful in a wide range of deep learning tasks.
However, constructing efficient equivariant networks for general groups and domains is difficult. Recent work by \citet{finzi2021practical} directly solves the equivariance constraint for arbitrary matrix groups to obtain equivariant MLPs (EMLPs). 
But this method does not scale well and scaling is crucial in deep learning.
Here, we introduce Group Representation Networks (G-RepsNets), a lightweight equivariant network for arbitrary matrix groups with features represented using tensor polynomials. The key intuition for our design is that using tensor representations in the hidden layers of a neural network along with simple inexpensive tensor operations can lead to expressive universal equivariant networks.
We find G-RepsNet to be competitive to EMLP on several tasks with group symmetries such as $O(5)$, $O(1, 3)$, and $O(3)$ with scalars, vectors, and second-order tensors as data types. 
On image classification tasks, we find that G-RepsNet using second-order representations is competitive and often even outperforms sophisticated state-of-the-art equivariant models such as GCNNs~\cite{cohen2016group} and $E(2)$-CNNs~\cite{weiler2019general}. To further illustrate the generality of our approach, we show that G-RepsNet is competitive to G-FNO~\cite{helwig2023group} and EGNN~\cite{satorras2021n} on N-body predictions and solving PDEs, respectively, while being efficient.
\end{abstract}

\section{Introduction}\label{sec: introduction}
Group equivariance plays a key role in the success of several popular architectures such as translation equivariance in Convolutional Neural Networks (CNNs) for image processing~\citep{lecun1989backpropagation}, 3D rotational equivariance in Alphafold2~\citep{jumper2021highly}, and equivariance to general discrete groups in Group Convolutional Neural Networks (GCNNs)~\citep{cohen2016group}.

But designing efficient equivariant networks can be challenging both because they require domain-specific knowledge and can be computationally inefficient. E.g., there are several works designing architectures for different groups such as the special Euclidean group $SE(3)$~\citep{fuchs2020se}, special Lorentz group $O(1, 3)$~\citep{bogatskiy2020lorentz}, discrete Euclidean groups~\citep{cohen2016group, ravanbakhsh2017equivariance}, etc. Moreover, some of these networks can be computationally inefficient, prompting the design of simpler and lightweight equivariant networks such as EGNN~\citep{satorras2021n} for graphs and vector neurons~\citep{deng2021vector} for point cloud processing.

\citet{finzi2021practical} propose an algorithm to construct equivariant MLPs (EMLPs) for arbitrary matrix groups when the data is provided using tensor polynomial representations. This method directly computes the basis of the equivariant MLPs and requires minimal domain knowledge. Although elegant, EMLPs are computationally expensive. Similarly, using equivariant basis functions can also be computationally expensive~\citep{fuchs2020se, thomas2018tensor} leading to several group-specific efficient architectures. Equivariance as an inductive bias makes the learning problem easier~\cite{van2020mdp} and provides robustness guarantees~\cite{mao2023robust}, and scaling is important to learn more complex functions that are not easy to model using inductive biases such as equivariance. Hence, we need a scalable equivariant network.

To this end, we introduce Group Representation Network (G-RepsNet), which replaces scalar representation from classical neural networks with tensor representations of different orders to obtain expressive equivariant networks. We use the same tensor polynomial representations as EMLP to represent the features in our network. But unlike EMLP, we only use inexpensive tensor operations such as tensor addition and tensor multiplication to construct our network. We show that even with these simple operations, we obtain a universal network for orthogonal groups. Further, we empirically show that G-RepsNet provides competitive results to existing state-of-the-art equivariant models and even outperforms them in several cases while having a simple and efficient design. 

Our proposal generalizes vector neurons ~\cite{deng2021vector}, which introduce first-order $O(3)$ tensor representations in various architectures to obtain equivariance to the $O(3)$ group. In contrast, G-RepsNet is a universal network that works for arbitrary matrix groups and uses higher-order tensor polynomial representations, while being computationally efficient. The main contributions as well as the summary of our results are detailed below.
\begin{enumerate}
    \item We propose a novel lightweight method for constructing equivariant architectures. We call the obtained architectures G-RepsNets, which is a computationally efficient architecture equivariant to arbitrary matrix groups, a universal approximator of equivariant functions for orthogonal groups, and easy to construct. We show that G-RepsNet is competitive to existing sophisticated equivariant models across a wide range of domains as summarized next.
    \item On synthetic datasets from \cite{finzi2021practical}, we show that G-RepsNet is computationally much more efficient than EMLP and also performs competitively to EMLP across different groups such as $O(5)$, $O(3)$, and $O(1, 3)$ using scalars, vectors, and second-order tensor representations.

    \item We show that G-RepsNet with second-order tensor representations outperforms sophisticated state-of-the-art equivariant networks for image classification such as GCNNs~\cite{cohen2016group} and $E(2)$-CNNs~\cite{weiler2019general} when trained from scratch, and equitune~\cite{basu2023equi} when used with pretrained models.
    \item To further illustrate the generality of our approach, we show that G-RepsNet is competitive to G-FNO~\cite{helwig2023group} and EGNN~\cite{satorras2021n} on N-body predictions and solving PDEs, respectively, while being computationally efficient.
\end{enumerate}

\begin{figure}
    \centering
      \includegraphics[width=0.45\textwidth ]{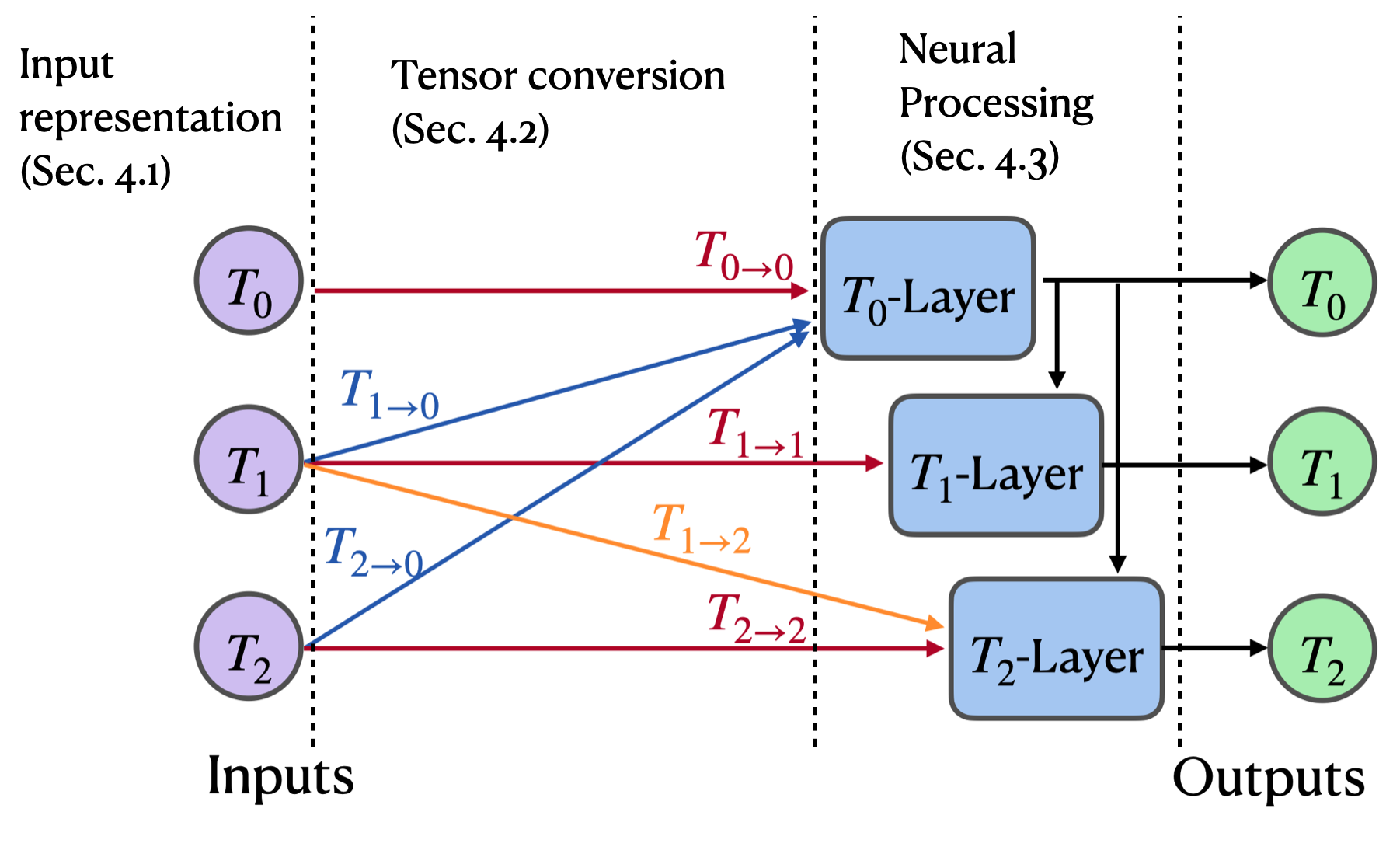}
    \caption{A summary of G-RepsNet layer construction exemplified with inputs of types $T_0, T_1,$ and $T_2$, and outputs of the same types. Each layer consists of three subcomponents: \\
    i) input feature representation shown as $T_i$,\\
    ii) converting tensor types appropriately shown using arrows from $T_i$ to $T_j$, and \\
    iii) neural processing the converted tensors using appropriate neural networks, as discussed in \S.~\ref{sec: G-RepsNet_Architecture}.
    %
    }
    \label{fig:grepsnet_architecture}
\end{figure}

\section{Related Works}\label{subsec:related_works}

\textbf{Parameter sharing} A popular method for constructing group equivariant architectures involves sharing learnable parameters in the network to guarantee equivariance, e.g. CNNs~\citep{lecun1989backpropagation}, GCNNs~\citep{cohen2016group, kondor2018generalization}, Deepsets~\citep{zaheer2017deep}, etc. However, all these methods are restricted to discrete groups, unlike our work which can handle equivariance to arbitrary matrix groups.  

\textbf{Steerable networks} Another popular approach for constructing group equivariant networks is by first computing a basis of the space of equivariant functions, then linearly combining these basis vectors to construct an equivariant network. This method can also handle continuous groups. Several popular architectures employ this method, e.g. steerable CNNs~\citep{cohen2016steerable}, $E(2)$-CNNs~\citep{weiler2019general}, Tensor Field Networks~\citep{thomas2018tensor}, $SE(3)$-transformers~\citep{fuchs2020se}, EMLPs~\cite{finzi2021practical} etc. But, these methods are computationally expensive and, thus, often replaced by efficient equivariant architectures for specific models, e.g., $E(n)$ equivariant graph neural networks~\citep{satorras2021n} for graphs and vector neurons~\citep{deng2021vector} for point cloud processing. More comparisons with EMLPs are provided in Sec.~\ref{sec:additional_details_related_works}. \citet{kondor2018covariant} propose using steerable higher-order permutation representation to obtain a permutation-invariant graph neural networks. In contrast, we use higher-order tensors for arbitrary matrix groups, work with arbitrary base models such as CNNs, FNOs, etc., and show that our architecture is a universal approximator for functions equivariant to orthogonal groups.

\textbf{Representation-based methods}
A simple alternative to using steerable networks for continuous networks is to construct equivariant networks by simply representing the data using group representations, only using scalar weights to combine these representations, and using non-linearities that respect their equivariance. Works that use representation-based methods include vector neurons~\citep{deng2021vector} for $O(3)$ group and universal scalars~\cite{villar2021scalars}. Vector neurons are restricted to first-order tensors and universal scalars face scaling issues, hence, mostly restricted to synthetic experiments. More comparisons with universal scalars are provided in Sec.~\ref{sec:additional_details_related_works}.

\textbf{Frame averaging} Yet another approach to obtain group equivariance is to use frame-averaging~\citep{yarotsky2022universal, puny2021frame}, where averaging over equivariant frames corresponding to each input is performed to obtain equivariant outputs. This method works for both discrete and continuous groups but requires the construction of these frames, either fixed by design as in ~\cite{puny2021frame, basu2023equi} or learned using auxiliary equivariant neural networks as in ~\cite{kaba2023equivariance}. Our method is, in general, different from this approach since our method does not involve averaging over any frame or the use of auxiliary equivariant networks. For the special case of discrete groups, the notion of frame averaging is closely related to both parameter sharing as well as representation methods. Hence, in the context of equituning~\citep{basu2023equi}, we show how higher-order tensor representations can directly be incorporated into their frame-averaging method.

\section{Group and Representation Theory}\label{subsec:background_group_theory}
Basics of groups and group actions are provided in \S\ref{app_sec:additional_definition}.
Let $GL(m)$ represent the group of all invertible matrices of dimension $m$. Then, for a group $G$, the \textbf{linear group representation} of $G$ is defined as the map $\rho: G \mapsto GL(m)$ such that $\rho (g_1g_2) = \rho(g_1)\rho(g_2)$ and $\rho(e) = I$, the identity matrix. A group representation of dimension $m$ is a linear group action on the vector space $\R^m$.

For a finite group $G$, the (left) \textbf{regular representation} $\rho$ over a vector space $V$ is a linear representation over $V$ that is freely generated by the elements of $G$, i.e., the elements of $G$ can be identified with a basis of $V$. Further, $\rho(g)$ can be determined by its action on the corresponding basis of $V$, $\rho(g): h \mapsto gh$ for all $h \in G$. For designing G-RepsNet, note that the size of regular representation is proportional to the size of $G$. Hence, the size of any representation, $m$, can be written as $m = |G|\times d$ for some integer $d$. We call the first dimension of size $|G|$ as the \textbf{group dimension}.

We call any linear group representation other than the regular representation as
\textbf{non-regular group representation}. Examples of such representations include representations written as a Kronecker sum of irreducible representations (basis of a group representation). In the design of G-RepsNet, we use regular representation for finite groups and non-regular representations for continuous groups.

Given some base linear group representation $\rho(g)$ for $g \in G$ on some vector space $V$, we construct \textbf{tensor representations} by applying Kronecker sum $\oplus$, Kronecker product $\otimes$, and tensor dual $^*$. Each of these tensor operations on the vector spaces leads to corresponding new group actions. The group action corresponding to $V^*$ becomes $\rho(g^{-1})^T$. Let $\rho_1(g)$ and $\rho_2(g)$ for $g \in G$ be group actions on vector spaces $V_1$ and $V_2$, respectively. Then, the group action on $V_1\oplus V_2$ is given by $\rho_1(g) \oplus \rho_2(g)$ and that on $V_1\otimes V_2$ is given by $\rho_1(g) \otimes \rho_2(g)$. 

We denote the tensors corresponding to the base representation $\rho$ as $T_1$ tensors, i.e., tensors of order one, and $T_0$ denotes a scalar. In general, $T_m$ denotes a tensor of order $m$. Further, Kronecker product of tensors $T_m$ and $T_n$ gives a tensor $T_{m+n}$ of order $m+n$. We use the notation $T_m^{\otimes r}$ to denote $r$ times Kronecker product of $T_m$ tensors. Kronecker sum of two tensors of types $T_m$ and $T_n$ gives a tensor of type $T_m \oplus T_n$. Finally, Kronecker sum of $r$ tensors of the same type $T_m$ is written as $rT_m$.

\section{G-RepsNet Architecture}\label{sec: G-RepsNet_Architecture}
Here, we describe the general design of the G-RepsNet architecture. Each layer of G-RepsNet consists of three subcomponents: i) representing features using appropriate tensor representation (\S\ref{subsec: input_representations}), ii) converting tensor types of the input representation (\S\ref{subsec: tensor_conversion}), and iii) processing these converted tensors (\S\ref{subsec: processing_tensors}). Finally, \S\ref{subsec: properties_special_cases} discusses some properties of our network along with existing architectures that are special cases of G-RepsNet. Now we describe these subcomponents in detail.

\subsection{Input Feature Representations}\label{subsec: input_representations} We employ two techniques to obtain input tensor representations for networks with regular and non-regular representations as described below.

\textbf{Regular representation:} We use regular representations for all finite groups, e.g., cyclic group C$_n$ of discrete rotations of $\frac{360}{n}$ degrees. For input features for regular representations, we simply use the input features obtained from the $E(2)$-CNNs~\cite{weiler2019general}, but any regular representation works with our model. Thus, if we are given an image of dimension $B\times C \times H \times W$, the $T_1$ regular representation is of dimension $|G|\times B \times C \times H \times W$, where $|G|, B, C, H, W$ are the group dimension, batch size, channel size, height, and width, respectively. Similarly, for any tensor of type $T_i$, the group dimension of size $|G|^i$. 

\textbf{Non-regular representation:} We use non-regular representation for all continuous groups, e.g. $O(n)$ group of rotations and reflection in an $n$-dimensional vector space. For non-regular representations, usually, the data is naturally provided in suitable tensor representations, e.g., position and velocity data of particles from the synthetic datasets in \cite{finzi2021practical} are provided in the form of Kronecker sum of irreducible representations of groups such as $O(n)$. In all our experiments using continuous groups, the inputs are already provided as tensor representations using the appropriate irreducible representations.

\subsection{Tensor Conversion}\label{subsec: tensor_conversion} Crucial to our architecture is the tensor conversion component. The input to each layer in Fig.~\ref{fig:grepsnet_architecture} is given as a concatenation of tensors of varying orders. But $T_i$-layer in Fig.~\ref{fig:grepsnet_architecture} only processes tensors of order $T_i$. Thus, to process tensors of order $T_j$, $j\neq i$, they must first be converted to tensors of order $T_i$ and then passed to the $T_i$-layer. Our tensor conversion algorithm is described next.

When $i>j>0$, we convert tensors of type $T_j$ to tensors of type $T_i$ by first writing $i = kj + r$, where $k = \lfloor i /j \rfloor$. Then, we obtain $T_i$ from $T_j$ and $T_1$ as $T_j^{\otimes k} \otimes T_1^{\otimes r}$. When using non-regular representations, we assume that the input to the G-RepsNet model always consists of some tensors with $T_1$ representations, which is not a strong assumption that helps keep our construction simple and also encompasses all experiments from~\cite{finzi2021practical}.

When $i=0$, we convert each input of type $T_j$ to type $T_0$ by using an appropriate invariant operator, e.g. Euclidean norm for Euclidean groups, or averaging over the group dimension for regular groups. These design choices keep our design lightweight as well as expressive as we show both theoretically as well as empirically. Details on processing these inputs are described next.

\subsection{Neural Processing} \label{subsec: processing_tensors}
Now we discuss how the various $T_i$-layers are constructed and how they process the input tensor features that have been converted to $T_i$ tensor types. We use different techniques for regular and non-regular tensor representations.

\textbf{Regular representation:} Recall that regular representations for tensors of type $T_i$ have group dimensions equal to $|G|^i$, where $|G|$ is the size of the group. For tensors of dimension $(|G|^i\times B) \times C \times H \times W$, we treat the group dimension just like the batch dimension and process the $(|G|^i\times B)$ inputs in parallel through the same model. Here we are free to choose any model of our choice for any of the $T_i$-layers, e.g., MLP, CNNs, FNOs, etc. We call these models of choice our \textbf{base model} just like used in frame-averaging~\cite{puny2021frame} and equitune~\cite{basu2023equi}.

\textbf{Non-regular representation:} Here, we impose certain restrictions on what models can be used for $T_i$-layers and how to use them. First, the $T_0$-layer passes all the tensors of type $T_0$ or scalars through a neural network such as an MLP or a CNN. Since the inputs are invariant scalars, the outputs are always invariant and thus, there are no restrictions on the neural network used for the $T_0$-layer, i.e., they may also use non-linearities. Now we describe how to process the tensors of type $T_i$ for $i>0$.

Let us call the output from the $T_0$-layer as $Y_{T_0}$. For a $T_i$-layer with $i > 0$, we simply pass it through a linear neural network with no point-wise non-linearities or bias terms to ensure that the output is equivariant. Let us call this output $H_{T_i}$. Then, to mix the $T_i$ tensors with the $T_0$ tensors better, we update $H_{T_i}$ as $H_{T_i} = H_{T_i} * \frac{Y_{T_0}}{\text{inv}(H_{T_i})}$, where inv($\cdot$) is a group-invariant function such as the Euclidean norm for a Euclidean group. Finally, we pass $H_{T_i}$ through another linear layer without any bias or pointwise non-linearities to obtain $Y_{T_i}$. This \textit{mixing} of various tensor types is crucial to make our network expressive and is required for the universality of our network.

\subsection{Properties}\label{subsec: properties_special_cases}
Here first we discuss the equivariance and universality properties of our model. Then, we discuss some existing architectures that are special cases of G-RepsNet.

\textbf{Equivariance:} For regular representations, any group action applied to the input appears as a permutation in the group dimension. Further, since the data is processed in parallel along the group dimension, the output permutes accordingly, making our model equivariant. For non-regular representation, the $T_0$ layer only processes invariant tensors and, hence, preserves equivariance of the overall model. Moreover, the $T_i$-layers simply perform a linear combination of tensors, making the overall model equivariant. A proof for equivariance of our model is given in \S.~\ref{app_sec:proof_of_equivariance}.

\textbf{Universality:} Even though we build a lightweight architecture with only inexpensive tensor operations, we show that our models are universal approximators of equivariant functions. This ensures that our models are expressive. For models constructed for regular representations, it is easy to verify that there exist G-RepsNets that are universal approximators of equivariant functions. To that end, note that restricting G-RespNet to only $T_1$ tensors and taking an average with group inverses along the group dimension gives group symmetrization in equitune~\cite{basu2023equi, yarotsky2022universal}. It is well known that the symmetrization of universal approximators such as MLPs give universal approximators of equivariant functions~\cite{yarotsky2022universal}. It follows that G-RepsNets are universal approximators of equivariant functions for regular representations. Note that even though our models using features of type $T_1$ themselves are universal approximators, we illustrate empirically that higher order tensors significantly boost the performance of G-RepsNet with regular representation.

For non-regular representations, we provide simple constructive proofs showing the universality properties of the G-RepsNet architecture. We first show that G-RepsNet can approximate arbitrary invariant scalar functions of vectors from $O(d)$ and $O(1, d)$ groups. Then, we extend the proof to vector-valued functions for the same groups. First, recall the Fundamental Theorem of Invariant Theory for $O(d)$ as described in Lemma~.\ref{lem:first_fundamental}.
\begin{lemma}[\cite{weyl1946classical}]\label{lem:first_fundamental}
    A function of vector inputs returns an invariant scalar if and only if it can be written as a function only of the invariant scalar products of the input vectors. That is, given input vectors $(X_1, X_2, \ldots, X_n)$, $X_i \in \R^d$, any invariant scalar function $h: \R^{d \times n} \mapsto \R$ can be written as 
    \begin{align}
        h(X_1, X_2, \ldots, X_n) = f(\langle X_i, X_j \rangle_{i, j=1}^n), \label{eqn:first_fundamental}
    \end{align}
    where $\langle X_i, X_j \rangle$ denotes the inner product between $X_i$ and $X_j$, and $f$ is an arbitrary function.
\end{lemma}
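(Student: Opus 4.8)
The plan is to prove the two directions separately. The reverse implication (``if'') is immediate: for any $Q \in O(d)$ we have $\langle Q X_i, Q X_j\rangle = X_i^T Q^T Q X_j = \langle X_i, X_j\rangle$, so each scalar product is $O(d)$-invariant, and hence any function $f$ applied to the collection $(\langle X_i, X_j\rangle)_{i,j=1}^n$ is an invariant scalar function of $(X_1,\ldots,X_n)$. The substantive content is the forward implication, and the strategy is to reduce it to the statement that the Gram matrix is a \emph{complete} invariant for the diagonal action of $O(d)$ on $\R^{d\times n}$.

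Concretely, collect the inputs into $X = [X_1,\ldots,X_n]\in\R^{d\times n}$ and let $M(X) = X^T X$, so $M(X)_{ij} = \langle X_i, X_j\rangle$. The key claim is: if $M(X) = M(X')$ for two configurations, then $X' = QX$ for some $Q\in O(d)$. Granting this, the $O(d)$-orbit of $X$ is determined exactly by $M(X)$; since an invariant $h$ is constant on orbits, there is a well-defined function $\tilde f$ on $\{M(X) : X\in\R^{d\times n}\}$ with $h(X) = \tilde f(M(X))$. Extending $\tilde f$ arbitrarily to all symmetric $n\times n$ matrices (equivalently, to a function $f$ of the $n^2$ entries) yields $h(X_1,\ldots,X_n) = f(\langle X_i, X_j\rangle_{i,j=1}^n)$, which is \eqref{eqn:first_fundamental}.

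It then remains to prove the claim. Let $r = \operatorname{rank}(M(X)) = \operatorname{rank}(X) = \dim\operatorname{span}\{X_i\}$; since $M(X) = M(X')$ the same $r$ works for $X'$, and $r\le d$. Pick indices selecting a basis of the column space of $X$ and run Gram--Schmidt; the orthonormalization coefficients and the resulting expansion coefficients $a_{ik}$ in $X_i = \sum_k a_{ik} u_k$ depend only on inner products, hence only on the entries of $M$. Running the identical recipe on $X'$ therefore produces an orthonormal basis $u_1',\ldots,u_r'$ of $\operatorname{span}\{X_i'\}$ with $X_i' = \sum_k a_{ik} u_k'$ using the \emph{same} coefficients. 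Since $r\le d$, extend $\{u_k\}$ and $\{u_k'\}$ to orthonormal bases of $\R^d$ and let $Q\in O(d)$ send $u_k\mapsto u_k'$; then $QX_i = X_i'$ for all $i$, proving the claim.

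The main obstacle is this completeness claim, and in particular the bookkeeping for degenerate configurations: one must check carefully that the Gram--Schmidt and reconstruction coefficients really are functions of $M$ alone (so the construction can be run in parallel for $X$ and $X'$), and that $r\le d$ is what allows the partial isometry $u_k\mapsto u_k'$ to be completed to a genuine element of $O(d)$. I would not try to upgrade $f$ to a polynomial when $h$ is polynomial; that strengthening is the classical algebraic core of the First Fundamental Theorem and is unnecessary here, since the statement of Lemma~\ref{lem:first_fundamental} permits $f$ to be arbitrary.
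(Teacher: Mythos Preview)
Your proof is correct, but note that the paper does not itself prove Lemma~\ref{lem:first_fundamental}: it is stated with a citation to Weyl (1946) as the classical First Fundamental Theorem of Invariant Theory for $O(d)$ and used as a black box in the proofs of Theorems~\ref{thm:GReps_scalar_universal_proof} and~\ref{thm:GReps_vector_universal_proof}. So there is no ``paper's own proof'' to compare against.

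That said, the argument you give is the standard geometric route and is sound. The only point worth tightening is the one you flag yourself: when you ``pick indices selecting a basis of the column space of $X$'', you should note explicitly that this choice is itself a function of $M(X)$, since $\ker X = \ker X^TX$; hence the \emph{same} index set works for $X'$, and the parallel Gram--Schmidt is legitimate. With that made explicit, the completeness claim and the factorization $h = \tilde f \circ M$ go through exactly as you describe. Your closing remark is also right: Weyl's original theorem is the stronger algebraic statement that polynomial invariants are generated as a polynomial ring by the $\langle X_i, X_j\rangle$, but the lemma as stated in the paper only asks for an arbitrary $f$, and your orbit-separation argument suffices for that.
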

As mentioned in \cite{villar2021scalars}, a similar result holds for the $O(1, d)$ group. In Thm.~\ref{thm:GReps_scalar_universal_proof}, we show that G-RepsNet can approximate arbitrary invariant scalar functions for $O(d)$ or $O(1, d)$ groups. The main idea of the proof is to show that G-RepsNet can automatically compute the necessary inner products $\langle X_i, X_j \rangle$ in \eqref{eqn:first_fundamental} and the function $f$ in \eqref{eqn:first_fundamental} can be approximated using an MLP in the $T_0$ layer. Detailed proof is provided in \S.~\ref{sec: universal}.

\begin{theorem}\label{thm:GReps_scalar_universal_proof}
    For given $T_1$ inputs $(X_1, X_2, \ldots, X_n)$ corresponding to $O(d)$ or $O(1, d)$ group, $X_i \in R^d$, any invariant scalar function $h: \R^{d \times n} \mapsto \R$, there exists a G-RepsNet model that can approximate $h$.
\end{theorem}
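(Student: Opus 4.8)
The plan is to reduce the statement to the Fundamental Theorem of Invariant Theory and then verify that every ingredient of the resulting expression is realizable by the G-RepsNet template of \S\ref{subsec: tensor_conversion}--\S\ref{subsec: processing_tensors}. By Lemma~\ref{lem:first_fundamental} (and its $O(1,d)$ analogue cited from \cite{villar2021scalars}), any invariant scalar factors as $h(X_1,\dots,X_n) = f\big(\langle X_i,X_j\rangle_{i,j=1}^n\big)$, where for $O(1,d)$ the pairing is the Minkowski form $\langle X,Y\rangle_\eta = X^\top \eta Y$. Since the map $(X_1,\dots,X_n)\mapsto(\langle X_i,X_j\rangle)_{i,j}$ is a continuous quotient onto its image (the cone of rank-$\le d$ Gram matrices, or its indefinite analogue), a continuous $h$ descends to a continuous $f$ on that image — which is all that is needed to invoke the universal approximation theorem for MLPs. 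So it suffices to construct a G-RepsNet that first exposes all Gram entries $\langle X_i,X_j\rangle$ as $T_0$ features and then applies an MLP to them in a $T_0$-layer.

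For the first step I would use the polarization identity $\langle X_i,X_j\rangle = \tfrac12\big(\mathrm{inv}(X_i+X_j) - \mathrm{inv}(X_i) - \mathrm{inv}(X_j)\big)$, where $\mathrm{inv}(\cdot)$ is the squared Euclidean norm for $O(d)$ and the quadratic form $X\mapsto X^\top\eta X$ for $O(1,d)$ — precisely the group-invariant operator the architecture uses to convert a $T_1$ feature to a $T_0$ feature. In the first G-RepsNet layer, the $T_1$-layer is a linear map with no bias and no nonlinearity; choosing its weights so that its outputs are exactly the finite list of vectors $\{X_i+X_j : 1\le i\le j\le n\}\cup\{X_i : 1\le i\le n\}$ (a linear function of the inputs, realizable once the layer is wide enough) and then applying the $i\to 0$ tensor conversion produces the $T_0$ features $\{\mathrm{inv}(X_i+X_j)\}\cup\{\mathrm{inv}(X_i)\}$. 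Through the displayed affine combination, these finitely many scalars determine every Gram entry.

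For the second step, feed this family of invariant scalars into the $T_0$-layer of the next G-RepsNet layer, which by construction is an arbitrary MLP with nonlinearities (permitted because its inputs are already invariant). Take this MLP to approximate the continuous map that first recovers $(\langle X_i,X_j\rangle)_{i,j}$ from the $\mathrm{inv}(\cdot)$ values via the affine combination above and then applies $f$. By universal approximation, on any compact set of inputs this composition approximates $f\big(\langle X_i,X_j\rangle\big)=h$ to arbitrary accuracy. Invariance of the whole construction is automatic from the argument of \S\ref{app_sec:proof_of_equivariance}, since we used only a linear $T_1$-map, the invariant $T_1\to T_0$ conversion, and an MLP acting on invariant features.

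I would flag a few points as requiring explicit care rather than routine verification. First, one must confirm the prescribed linear $T_1$-layer can emit all the needed pairwise sums simultaneously — a matter of sufficient output width, fully consistent with the architecture. Second, the mixing step $H_{T_i}\leftarrow H_{T_i}\cdot Y_{T_0}/\mathrm{inv}(H_{T_i})$ in the first layer must be addressed: it can be taken trivial (there are no $T_0$ inputs to mix), or its harmless rescaling effect absorbed into the subsequent MLP, and in either case it does not destroy the $T_1$ information we need. Third, for $O(1,d)$ the relevant $\mathrm{inv}(\cdot)$ is the indefinite Minkowski form and may be negative; the polarization identity nonetheless holds verbatim for any symmetric bilinear form, and the first fundamental theorem for $O(1,d)$ cited in \cite{villar2021scalars} supplies the identical reduction, so nothing changes. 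The genuine mathematical content is carried by Lemma~\ref{lem:first_fundamental}, the polarization identity, and MLP universality; the only real work is checking that the two-layer construction above is faithful to the exact G-RepsNet template.
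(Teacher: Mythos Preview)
Your proposal is correct and follows essentially the same route as the paper: produce all pairwise sums $X_i+X_j$ and the individual $X_i$ via a linear $T_1$-layer, convert to $T_0$ via the invariant norm, recover the Gram entries by polarization, and then invoke MLP universality on the resulting scalars to approximate $f$. If anything, your write-up is more careful than the paper's own proof---you make the polarization identity explicit, handle the $O(1,d)$ case via the Minkowski form rather than leaving it implicit, and flag the mixing step and width requirement, none of which the paper addresses.
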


Similarly, this result can be extended to vector functions as described in Thm.~\ref{thm:GReps_vector_universal_proof}. The proof for Thm.~\ref{thm:GReps_vector_universal_proof} is also constructive and is provided in \S.~\ref{sec: universal}.

\begin{theorem}\label{thm:GReps_vector_universal_proof}
    For given $T_1$ inputs $(X_1, X_2, \ldots, X_n)$ corresponding to $O(d)$ or $O(1, d)$ group, $X_i \in R^d$, any equivariant vector function $h: \R^{d \times n} \mapsto \R^d$, there exists a G-RepsNet model that can approximate $h$.
\end{theorem}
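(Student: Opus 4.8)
The plan is to deduce Theorem~\ref{thm:GReps_vector_universal_proof} from Theorem~\ref{thm:GReps_scalar_universal_proof} together with the classical characterization of equivariant vector-valued functions. Recall that, for the $O(d)$ group, any equivariant $h:\R^{d\times n}\mapsto\R^d$ admits a vector-field analogue of Lemma~\ref{lem:first_fundamental}: there exist invariant scalar functions $f_1,\ldots,f_n$ such that
\begin{align}
    h(X_1,\ldots,X_n) \;=\; \sum_{i=1}^{n} f_i\!\left(\langle X_j,X_k\rangle_{j,k=1}^{n}\right)\, X_i, \label{eqn:vec_representation_sketch}
\end{align}
and the analogous statement holds for $O(1,d)$ with the Minkowski inner product (see \cite{villar2021scalars}). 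So it suffices to build a G-RepsNet that (i) approximates the $n$ invariant coefficients $f_i$ and (ii) forms the linear combination \eqref{eqn:vec_representation_sketch} of the input vectors; both fit directly into the layer template of \S\ref{sec: G-RepsNet_Architecture}.

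First I would realize step (i) inside the $T_0$-layer. By the construction already used to prove Theorem~\ref{thm:GReps_scalar_universal_proof}, a G-RepsNet can form all the inner products $\langle X_j,X_k\rangle$ from the $T_1$ inputs and feed them into an MLP; hence the $T_0$-layer can approximate, to arbitrary accuracy on a compact set of inputs, the $n$ invariant scalars $g_i := f_i\!\left(\langle X_j,X_k\rangle\right)\cdot\mathrm{inv}(X_i)$, where $\mathrm{inv}$ is the invariant norm used by the architecture. Note $\mathrm{inv}(X_i)=\norm{X_i}=\sqrt{\langle X_i,X_i\rangle}$ is itself a continuous function of the Gram entries, so each $g_i$ is again a bona fide invariant scalar function and is covered by Theorem~\ref{thm:GReps_scalar_universal_proof}. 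These are collected as the output $Y_{T_0}=(g_1,\ldots,g_n)$.

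Next, step (ii) is carried out by a single $T_1$-layer. Its first (bias-free, nonlinearity-free) linear map is chosen to copy the input vectors into $n$ channels, $H_{T_1}^{(i)}=X_i$. The mixing update $H_{T_1}^{(i)}\leftarrow H_{T_1}^{(i)}*\big(Y_{T_0}\big)_i/\mathrm{inv}(H_{T_1}^{(i)})$ then yields $\big(g_i/\norm{X_i}\big)X_i \approx f_i(\langle X_j,X_k\rangle)\,X_i$, and the final linear map of the layer sums the $n$ channels to give $\sum_{i=1}^n f_i(\cdot)X_i\approx h$. Since every operation in this pipeline is a composition of continuous maps and the inputs range over a compact set, the total error is controlled by the accuracy of the $T_0$-layer's MLP, which finishes the $O(d)$ case.

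The step I expect to be the main obstacle is the $O(1,d)$ case, where the Minkowski "norm" $\mathrm{inv}(X_i)$ can vanish on nonzero vectors, so the division in the mixing step — and hence the rescaling $g_i=f_i\cdot\mathrm{inv}(X_i)$ — is only legitimate away from the null locus. I would handle this exactly as in \cite{villar2021scalars}: restrict attention to compact sets of inputs bounded away from that (measure-zero) locus, on which $\mathrm{inv}(H_{T_1}^{(i)})$ is bounded below and the $g_i$ remain bounded continuous functions of the (Minkowski) Gram matrix, so that the error-propagation argument of the $O(d)$ case goes through verbatim. Absorbing the normalization into $g_i$ keeps the target coefficients $f_i$ exact in the limit, completing the proof.
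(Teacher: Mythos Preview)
Your proposal is correct and follows essentially the same route as the paper's own proof: both invoke the vector-valued analogue of Lemma~\ref{lem:first_fundamental} from \cite{villar2021scalars}, have the $T_0$-layer approximate the pre-scaled coefficients $\norm{X_i}\cdot f_i(\langle X_j,X_k\rangle)$ so that the mixing step $X_i\cdot Y_{T_0}/\norm{X_i}$ cancels the norm and yields $f_i(\cdot)X_i$, and then sum the resulting $T_1$ channels with a final linear layer. Your explicit treatment of the $O(1,d)$ null locus is a refinement the paper omits, but otherwise the arguments coincide.
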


Finally, in \S.~\ref{app_subsec:method_special_cases} we show how several popular models such as vector neurons~\cite{deng2021vector}, harmonic networks~\cite{worrall2017harmonic}, and equitune~\cite{basu2023equi} are special cases of G-RepsNet.

\section{Applications and Experiments}\label{sec: applications_and_experiments}

Here we provide several experiments across a wide range of domains to illustrate that our model is competitive with existing state-of-the-art equivariant models across various domains while also being easy to design and computationally efficient. We provide two sets of experiments each for G-RepsNet with regular and non-regular representations. 

For non-regular representation, we provide two experiments: i) in \S.~\ref{subsec:exp_emlps} we compare G-RepsNet with EMLP on synthetic datasets, which encompass equivariance to several groups such as $O(5)$, $O(3)$, and $O(1, 3)$ and involves tensors of different orders; ii) in \S.~\ref{subsec:exp_GNNs} we compare G-RepsNet with EGNN on N-body dynamics prediction. Note that the groups considered are restricted to orthogonal groups, even though G-RepsNets work for arbitrary matrix groups just like EMLPs. This is because we directly take the datasets from \cite{finzi2021practical, satorras2021n}, which are restricted to orthogonal groups. 

For regular representation, we provide two more experiments: i) for image classification we show in \S.~\ref{subsec:exp_second_order_equituning} that G-RepsNet with higher-order tensors and CNNs as the base model can outperform popular equivariant models such as GCNNs and $E(2)$-CNNs when trained from scratch and equitune when used with pretrained models; ii) for solving PDEs, in \S.~\ref{subsec:exp_FNOs} we construct a G-RepsNet with FNOs as the base model, where we find G-RepsNets is competitive to more sophisticated equivariant models such as G-FNOs~\cite{helwig2023group} while being much faster than them. 

\subsection{Comparison with EMLPs}\label{subsec:exp_emlps}
\begin{figure*}
    \centering
    \begin{subfigure}{0.32\textwidth}
      \centering
      \includegraphics[width=\textwidth]{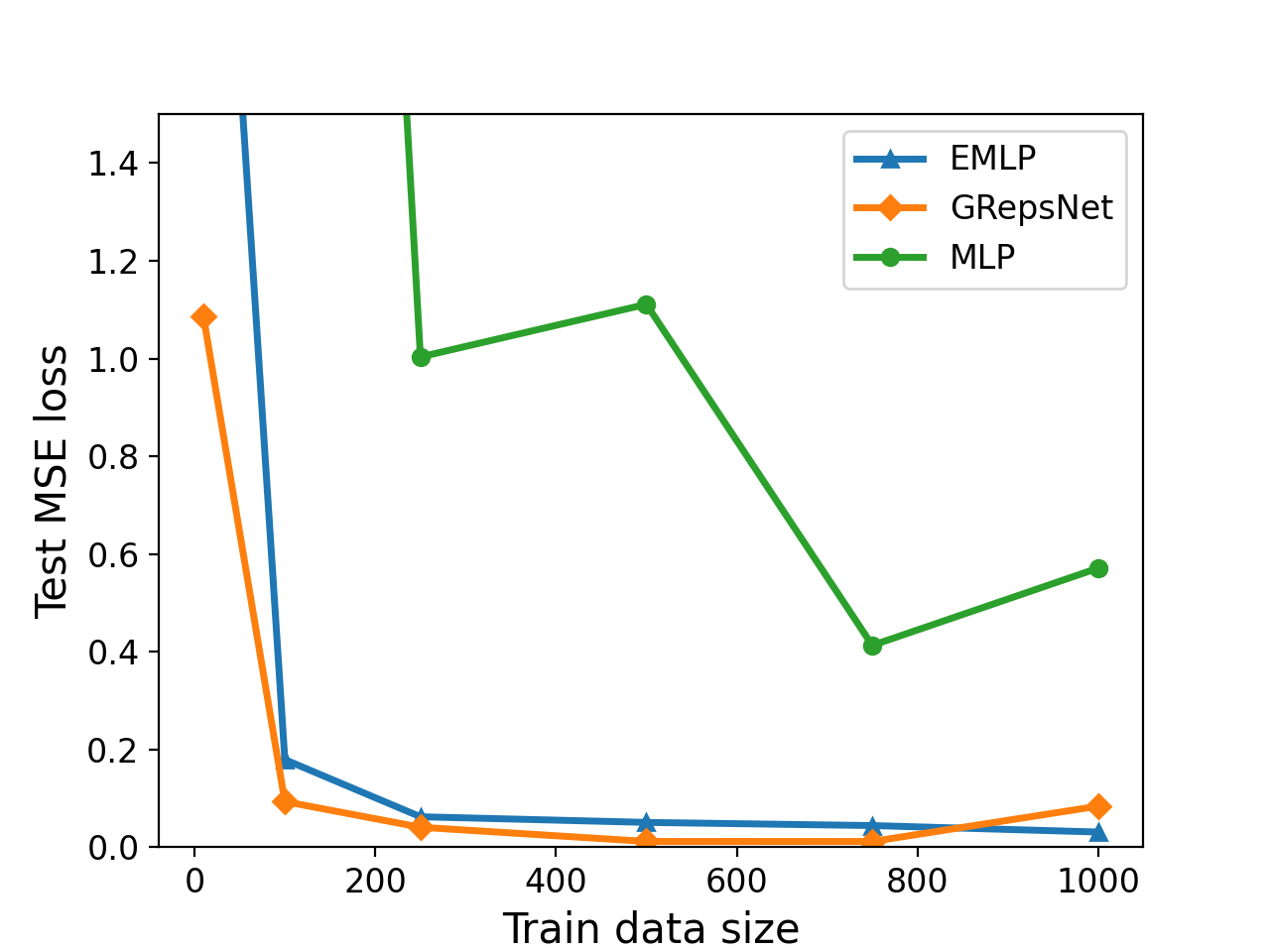}
      \caption{}
      \label{fig:o5_emlp_loss}
    \end{subfigure}\hfill
    \begin{subfigure}{0.32\textwidth}
      \centering
      \includegraphics[width=\textwidth]{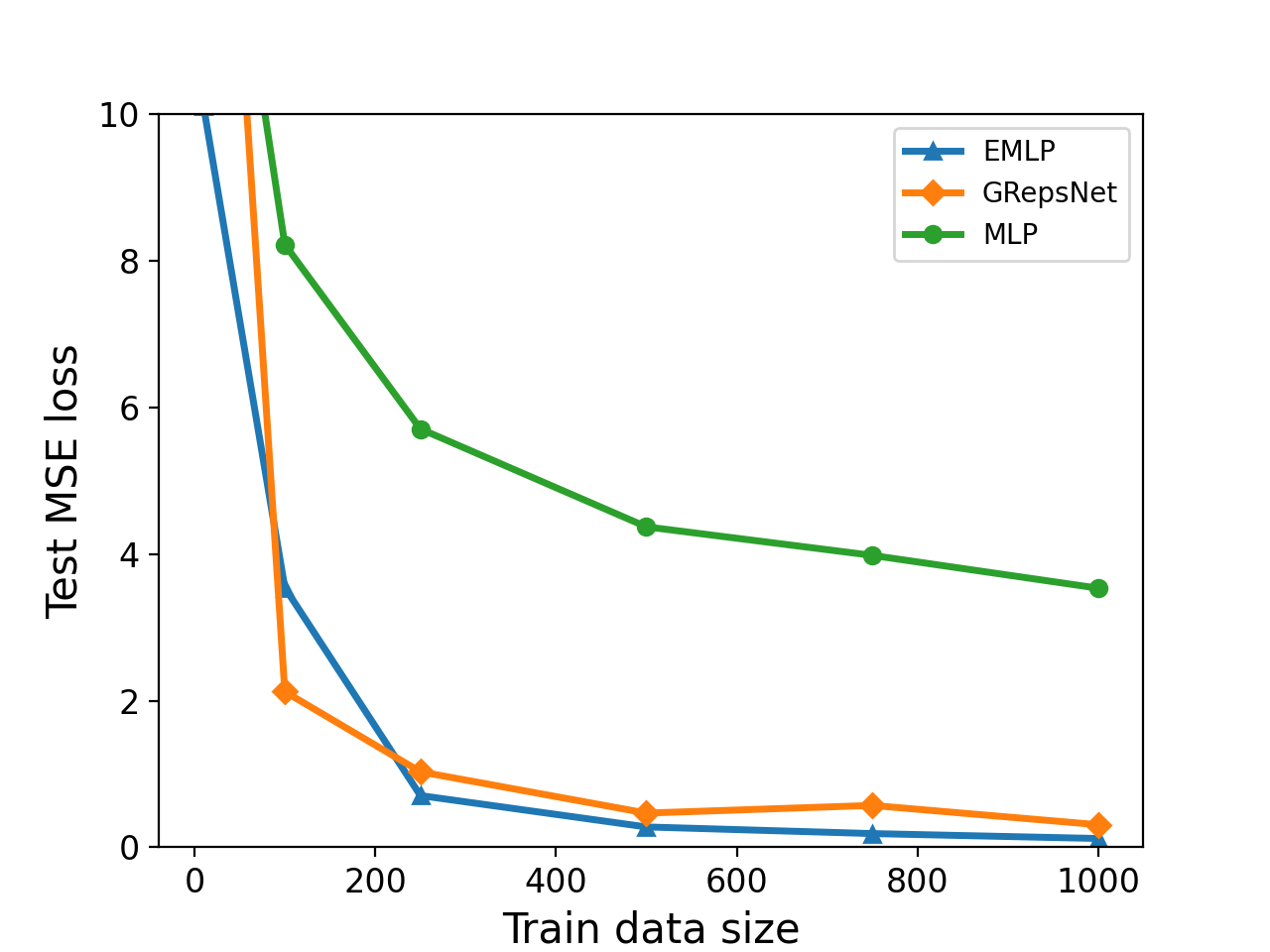}
      \caption{}
      \label{fig:o3_emlp_loss}
    \end{subfigure}\hfill
    \begin{subfigure}{0.32\textwidth}
      \centering
      \includegraphics[width=\textwidth]{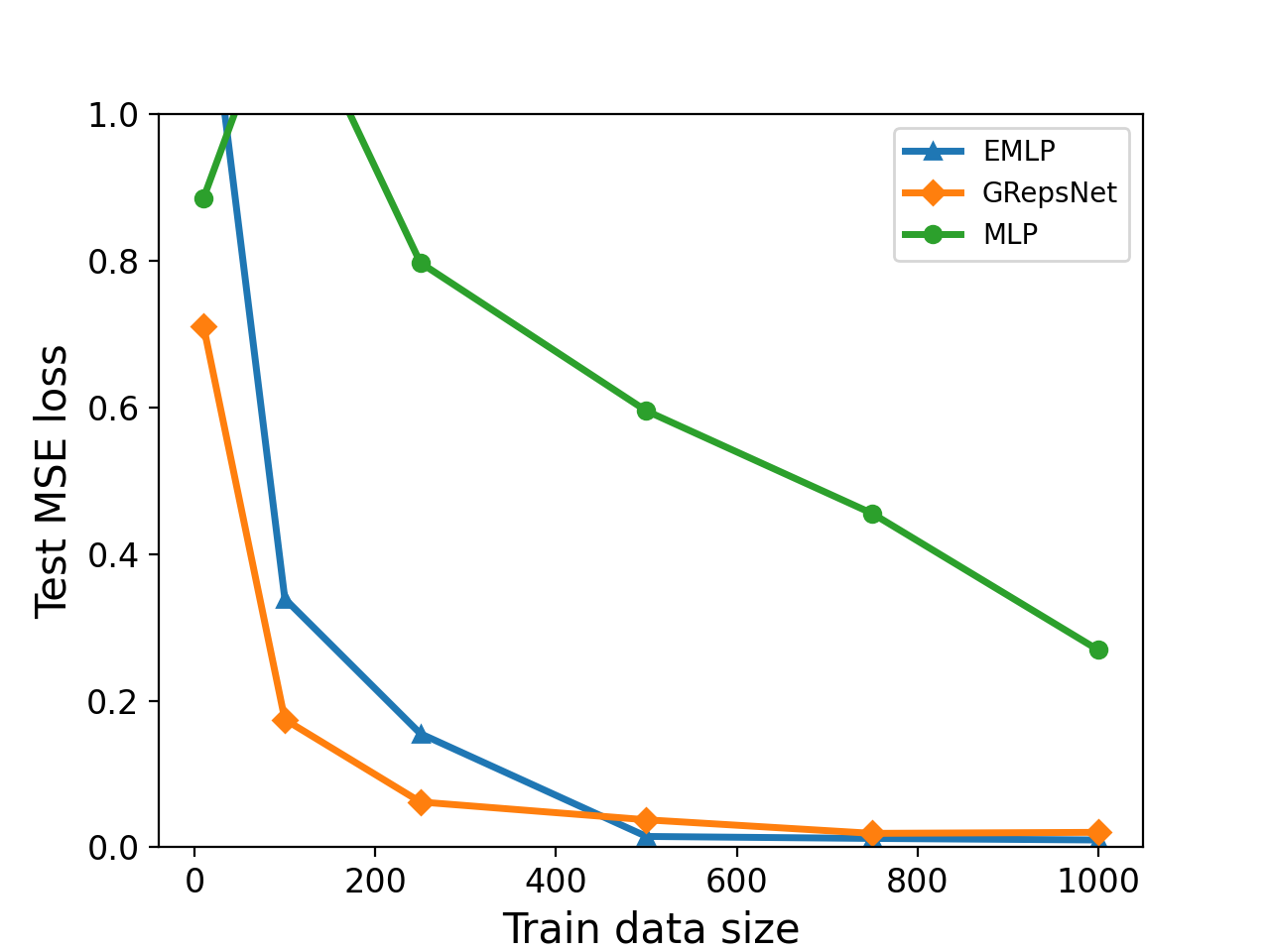}
      \caption{}
      \label{fig:so13_emlp_loss}
    \end{subfigure}
    \vspace{-1em}
    \caption{
    Comparison of GRepsNets with EMLPs~\cite{finzi2021practical} and MLPs for
    (a) O(5)-invariant synthetic regression task with input type $2T_1$ and output type $T_0$,
    (b) O(3)-equivariant regression with input as masses and positions of 5 point masses using representation of type $5T_0 + 5T_1$ and output as the inertia matrix of type $T_2$,
    (c) SO(1, 3)-invariant regression computing the matrix element in electron-muon particle scattering with input of type $4T_1$ and output of type $T_0$.
    Across all the tasks, we find that GRepsNets, despite its simple design, are competitive with the more sophisticated EMLPs and significantly outperform MLPs.
    }
    \label{fig:emlp_loss_comparison}
    \vspace{-1em}
\end{figure*}
\textbf{Datasets:} We consider three regression tasks from \citet{finzi2021practical}: $O(5)$-invariant task, $O(3)$-equivariant task, and $O(1, 3)$ invariant task. In $O(5)$-invariant regression task, we have input $X=\{x_i\}_{i=1}^{2}$ of type $2T_1$ and output $f(x_1, x_2) = \sin{(\norm{x_1})} - \norm{x_2}^3/2 + \frac{x_1^Tx_2}{\norm{x_1}\norm{x_2}}$ of type $T_0$. Then, for $O(3)$-equivariant task we have input $X=\{(m_i, x_i)\}_{i=1}^{5}$ of type $5T_0 + 5T_1$ corresponding to 5 masses and their positions. The output is the inertia matrix $\mathcal{I} = \sum_{i}m_i(x_i^Tx_iI - x_ix_i^T)$ of type $T_2$. Finally, for the $O(1, 3)$-equivariant task, we consider the electron-muon scattering ($e^{-} + \mu^{-} \to e^{-} + \mu^{-}$) task used in \cite{finzi2021practical}, originally from \cite{bogatskiy2020lorentz}. Here, the input is of type $4T_{(1, 0)}$ corresponding to the four momenta of input and output electron and muon, and the output is the matrix element (c.f.~\cite{finzi2021practical}) of type $T_{(0, 0)}$.

\textbf{Model design:} For all the experiments here, we design G-RepsNet with 5 layers where use different tensor representations in each of the models depending on the application. We call the number of tensors in a hidden layer as its channel size. We fix a channel size of 100. For the $O(5)$-invariant task, the input and output representations are fixed from the dataset to be $2T_1$ and $T_0$, respectively. We use type $(T_1 + T_2)$ as the tensor type for the hidden representation, i.e., each hidden layer consists of $100$ tensors of type $(T_1 + T_2)$. As described in \S.~\ref{subsec: processing_tensors}, we use pointwise non-linearities and bias terms only with the $T_0$-layer, whereas the $T_1$-layer is a single-layered densely connected network with no non-linearities or bias terms. We use the same design for the $O(1,3)$-invariant task, i.e., we use the same model size and tensor types but for the group $O(1, 3)$ instead of $O(5)$. For the $O(3)$-equivariant task, we use the same channel size, as the previous two tasks but use the hidden layer representations as $(T_1 + T_2 + T_3)$. 

\textbf{Experimental setup:} 
We train MLPs, EMLPs, and G-RepsNet on the datasets discussed above with varying sizes for 100 epochs. For each task and model, we choose model sizes between small (with channel size 100) and large (with channel size 384). Similarly, we choose the learning rate from $\{10^{-3}, 3\times 10^{-3}\}$. In general, MLPs and EMLPs provide the best results with the large model size, whereas G-RepsNets produce better results with small model sizes. For learning rate, we find $3\times 10^{-3}$ to be best in most cases. Further details on the hyperparameters are given in \S.~\ref{app_subsec:comparing_with_emlps_additional_exp_details}.

\textbf{Observations and results:} From Fig.~\ref{fig:emlp_loss_comparison}, we find that across all the tasks, G-RepsNets perform competitively to EMLPs and significantly outperform non-equivariant MLPs. Moreover, Tab.~\ref{tab:emlp_fulldata_time_comparison} and Fig.~\ref{fig:emlp_time_comparison} show that G-RepsNets are computationally much more efficient than EMLPs, while being only slightly more expensive than naive MLPs. This shows that G-RepsNet can provide competitive performance to EMLPs on equivariant tasks. Moreover, the lightweight design of G-RepsNets motivates its use in larger datasets.

\subsection{Modelling a Dynamic N-Body System with GNNs}\label{subsec:exp_GNNs}
\begin{table}
\centering
\caption{We find that G-RepsGNN provides comparable test loss and forward time compared to EGNN~\cite{satorras2021n}. Note that G-RepsGNN is constructed by simply replacing the representation in the GNN architecture from~\cite{gilmer2017neural} with $T_1$ representations along with tensor mixing from \S.~\ref{subsec: processing_tensors}, whereas EGNN is a specialized GNN designed for E($n$)-equivariant tasks.}
\label{tab:GRepsGNN}
\begin{tblr}{
  cells = {c},
  hlines,
  vline{2} = {-}{},
  hline{1,4} = {-}{0.08em},
}
Model    & Test Loss & Forward Time \\
EGNN     & 0.0069    & 0.001762     \\
G-RepsGNN (ours) & 0.0049    & 0.002018     
\end{tblr}
\end{table}
\textbf{Dataset details:}
We consider the problem of predicting the dynamics of $N$ charged particles given their charges and initial positions, where the symmetry group for equivariance is the orthogonal group $O(3)$. Each particle is placed at a node of a graph $\mathcal{G}=\{\mathcal{V}, \mathcal{E}\}$, where $\mathcal{V}$ and $\mathcal{E}$ are the sets of vertices and edges. We use the N-body dynamics dataset from \cite{satorras2021n}, where the task is to predict the positions of $N=5$ charged particles after $T=1000$ steps given their initial positions $\in \R^{3\times 5}$, velocities $\in \R^{3\times 5}$, and charges $\in \{-1, 1\}^5$. 

\textbf{Model design and experimental setup:} Let the edge attributes of $\mathcal{G}$ be $a_{ij}$, and let $h_i^l$ be the node feature of node $v_i \in \mathcal{V}$ at layer $l$ of a message passing neural network (MPNN). An MPNN as defined by \citet{gilmer2017neural} has an edge update, $m_{ij} = \phi_{e}(h_i^l, h_j^l, a_{ij})$ and a node update $h_i^{l+1} = \phi_{h}(h_i^l, m_{i})$, $m_i = \sum_{j \in \mathcal{N}(i)}m_{ij}$, where $\phi_{e}$ and $\phi_{h}$ are MLPs corresponding to edge and node updates, respectively. 

We design G-RepsGNN by making small modifications to the MPNN architecture. In our model, we use two edge updates for $T_0$ and $T_1$ tensors, respectively, and one node update for $T_1$ update. The two edge updates are $m_{ij, T_0} = \phi_{e, T_0}(\norm{h_i^l}, \norm{h_j^l}, a_{ij})$, $m_{ij, T_1} = \phi_{e, T_1}(h_i^l, h_j^l, a_{ij})$, where $\norm{\cdot}$ obtains $T_0$ tensors from $T_1$ tensors for the Euclidean group, $\phi_{e, T_0}(\cdot)$ is $T_0$-layer MLP, and $\phi_{e, T_1}(\cdot)$ is a $T_1$-layer made of an MLP without any pointwise non-linearities or biases. The final edge update is obtained as $m_{ij} = m_{ij, T_1} * m_{ij, T_0} / \norm{m_{ij, T_1}}$. Finally, the node update is given by $h_i^{l+1} = \phi_{h, T_1}(h_i^l, m_{i})$, where $m_i = \sum_{j \in \mathcal{N}(i)}m_{ij}$ and $\phi_{h, T_1}(\cdot)$ is an MLP without any pointwise non-linearities or biases. Thus, the final node update is a $T_1$ tensor. We compare G-RepsGNN with EGNN~\citep{satorras2021n}, which is a popular equivariant graph neural network. 

We closely followed \cite{satorras2021n} to generate the dataset: we used 3000 trajectories for train, 2000 trajectories for validation, and 2000 for test. Both EGNN and G-RepsGNN models have 4 layers and were trained for 10000 epochs, same as in \cite{satorras2021n}. 

\textbf{Results and Observations:} From Tab.~\ref{tab:GRepsGNN}, we find that even though EGNN is a specialized architecture for the task, G-RepsGNN performs competitively to EGNN. Note that here the comparison is made to EGNN since it is a computationally efficient expressive equivariant model just like G-RepsGNN, although restricted for processing graphs. Here our goal is not to achieve state-of-the-art results on this task but to simply show that our model is competitive with popular models even with minimal design changes to the non-equivariant base model MPNN. Further results of test losses and forward times for various other models are reported in  Tab.~\ref{tab:GRepsGNN_prev} in \S.~\ref{app_subsec:additional_results_GrepsGNN} for completeness. Since G-RepsGNN has a comparable computational complexity to EGNN, it is computationally much more efficient than many specialized group equivariant architectures that use spherical harmonics for $E(n)$-equivariance as noted from Tab.~\ref{tab:GRepsGNN_prev}.

\subsection{Second-Order Image Classification}\label{subsec:exp_second_order_equituning}
\begin{table}
\centering
\caption{$T_2$-equitune clearly outperforms $T_1$-equitune~\citep{basu2023equi}. Table shows mean (std) test accuracies for equituning using a pretrained Resnet with Rot90-CIFAR10 and Galaxy10.
We find that our extension of equituning using $T_2$ representations outperforms the traditional version that only uses $T_1$ representations.}
\label{tab:t2_equituing}
\begin{tabular}{c|ccc}
\hline
\small{Dataset\textbackslash{}Model} & \small{Finetune}       & \small{$T_1$-Equitune} & \small{$T_2$-Equitune}  \\ 
\hline
\small{Rot90-CIFAR10}        & \small{82.7 (0.5)} & \small{88.1 (0.3)}     & \small{\textbf{89.6 (0.3)}}                         \\ 
\hline
\small{Galaxy10}            & \small{76.9 (3.2)} & \small{79.3 (1.6)}     & \small{\textbf{80.7 (4.0)}}                          \\
\hline
\end{tabular}
\end{table}
\begin{table*}
\centering
\refstepcounter{table}
\caption{$T_2$-GCNN outperforms the traditional GCNN. Table shows mean (std) of classification accuracies on Rot90-CIFAR10 dataset for $T_1$-G-RepsCNN, $T_2$-G-RepsCNN, GCNN, and $T_2$-G-RepsGCNN for 10 epochs. Results are over 3 seeds.}
\label{tab:GCNN}
\begin{tblr}{
  cells = {c,fg=black},
  hlines = {0.08em},
  vline{2} = {-}{},
  hline{2} = {-}{0.05em},
}
Dataset \textbackslash{} Model & CNN        & $T_1$-G-RepsCNN & $T_2$-G-RepsCNN      & GCNN & $T_2$-G-RepsGCNN      \\
Rot90-CIFAR10                  & 46.6 (0.8) & 53.3 (1.4)     & \textbf{57.9 (0.6)} & 56.7 (0.5) & \textbf{57.9 (0.7)} 
\end{tblr}\vspace{-1em}
\end{table*}
\begin{table}
\centering
\caption{G-RpesFNOs are much cheaper than G-FNOs while giving competitive performance. Table shows mean (std) of relative mean square errors in percentage over 3 seeds. G-RepsFNOs use regular $T_1$ representations with FNO as the base model. Our simple architecture outperforms the non-equivariant FNO model and performs competitively with the more sophisticated G-FNO model that uses group convolutions.}
\label{tab:FNO}
\begin{tblr}{
  column{even} = {c},
  column{1} = {c},
  column{3} = {c},
  column{5} = {fg=black},
  hlines,
  vline{2} = {-}{},
}
Dataset\textbackslash{}Model & FNO         & G-RepsFNO    & G-FNO      \\
NS                 & 8.41 (0.4) & {5.31 (0.2)} & 4.78 (0.4) \\
NS-Sym        & 4.21 (0.1) & 2.92 (0.1) & 2.24 (0.1) 
\end{tblr}
\end{table}

We perform two sets of experiments: i) in the first, we train various image classification models from scratch and compare them with G-RepsNet and ii) we perform equivariant finetuning of non-equivariant models with higher-order tensors, hence extending the equivariant finetuning method of ~\cite{basu2023equi} to second-order tensors.

\textbf{Dataset:} For ablation studies to understand the effect of second-order tensors in image classification and for experiments involving training from scratch, we test on different datasets obtained by applying random rotations to the CIFAR10 dataset. When the random rotations are a multiple of $\frac{360}{n}$ for integer $n$, we call the dataset Rot$\frac{360}{n}$-CIFAR10, else if the rotations are by arbitrary angles in $(0, 360]$, we simply call it Rot-CIFAR10. These rotations are applied to ensure that the dataset exhibits the $C_n$ of multiples of $\frac{360}{n}$-degree rotations or $SO(2)$ symmetry, which helps test the working of equivariant networks. For our experiments on equivariant finetuning, we test on Rot90-CIFAR10 and Galaxy10 without any rotations. We do not apply rotations on Galaxy10 since it naturally has the $C_4$ symmetry.

\textbf{Experimental setup and model design:}  We first design a rot90-equivariant CNN with 3 convolutional layers followed by 5 fully connected layers and train it from scratch on CIFAR10 with random rotations. We use $T_1$ representations for the first $i$ layers and use $T_2$ representations for the rest, where the $T_2$ representations are obtained by simple outer product of the $T_1$ representation in the group dimension. We train each model for 10 epochs.  The results reported in Fig.~\ref{fig:t2_layer} indicate that using $T_2$ representations in the later layers of the same network significantly outperforms both non-equivariant as well as equivariant $T_1$-based CNNs. Details about hyperparameters are provided in \S.~\ref{app_subsec:second_order_image_classification}. 

We then compare with baseline equivariant architectures such as GCNNs~\cite{cohen2016group} and $E(2)$-CNNs~\cite{weiler2019general}.

First, for comparison with GCNNs, we use CNN as well as GCNN as our base model for constructing G-RepsNets. We use the same construction as above for our CNNs. For GCNNs, we simply replace the convolutions with $C_4$ group convolutions in the same CNN and adjust the channel sizes to ensure a nearly equal number of parameters.

We design two G-RepsCNN architectures: a) $T_1$-G-RepsCNN, where each layer has a $T_1$ representation, and b) $T_2$-G-RepsCNN, where all the layers except the last layer use $T_1$ representation and the last layer uses $T_2$ representation. Using GCNN as the base model, we construct $T_2$-G-RepsGCNN, by simply replacing the CNN model with a GCNN in the $T_2$-G-RepsCNN. Note that we do not construct $T_1$-GCNN as it results in the same model as GCNN. These models are compared on the Rot90-CIFAR10 dataset.

For comparison with $E(2)$-CNNs, we perform a similar comparison as that with GCNNs. Here we work with Rot-CIFAR10 dataset. This is because $E(2)$-CNNs are equivariant to larger groups than that of 90-degree rotations, so, we want to test the model's capabilities for these larger group symmetries. Here we build four variants of G-RepsCNNs: $T_1$-G-RepsCNN, $T_2$-G-RepsCNN, each for both C$_8$ and C$_{16}$ equivariance. Here C$_n$ for $n\in \{8, 16\}$ corresponds to the groups of $\frac{360}{n}$-degree rotations. All the G-RepsCNN consist of 3 layers of convolutions followed by 5 fully connected layers. The layer representations for $T_1$-G-RepsCNN are all $T_1$ tensors of the C$_n$ group. Whereas for $T_2$-G-RepsCNN, all layer representations except the last layer are $T_1$ representations and the last layer uses $T_2$ representations. Similar to GCNNs, we construct $T_2$-G-Reps$E(2)$-CNNs using $E(2)$-CNNs as the base model, for each group C$_n$, $n\in \{8, 16\}$. Note that $T_1$-$E(2)$-CNN is the same as the traditional $E(2)$-CNN for the C$_n$ group, which has $T_1$ representation at each layer. $T_2$-G-Reps$E(2)$-CNN has $T_1$ representations for each layer except for the last layer that uses a $T_2$ representation.

For experiments on equivariant finetuning, we take the equituning algorithm of \cite{basu2023equi} that uses $T_1$ representations and extend it to use $T_2$ representations in the final layers. We use pretrained Resnet18 as our non-equivariant base model and perform non-equivariant finetuning and equivariant finetuning with $T_1$ and $T_2$ representations. Additional experimental details are provided in \S.~\ref{app_sec:additional_exp_details}.

\textbf{Results:} From Tab.~\ref{tab:GCNN} and \ref{tab:e2cnn}, we make two key observations: a) $T_2$-G-RepsCNNs are competitive and often outperform the baselines GCNNs and $E(2)$-CNNs, b) $T_2$ features, when added to the baselines to obtain $T_2$-G-RepsGCNNs and $T_2$-G-Reps$E(2)$CNNs, they outperform the original $T_1$ counterpart for both C$_8$ and C$_{16}$ equivariance. This shows the importance of higher-order tensors in image classification. Thus, we not only provide competitive performance to baselines using our models but also improve the results from these baselines by adding $T_2$ features in them.
Finally, from Tab.~\ref{tab:t2_equituing}, we find that on both rot90-CIFAR10 and Galaxy10, $T_2$-equitune easily outperforms equitune, confirming the importance of $T_2$ features.

\subsection{Solving PDEs with FNOs}\label{subsec:exp_FNOs}

\textbf{Datasets and Experimental Setup:}

We consider two versions of the incompressible Navier-Stokes equation from \cite{helwig2023group, li2020fourier}. The first version is a Navier-Stokes equation without any symmetry (NS dataset) in the data, and a second version that does have 90$^{\circ}$ rotation symmetry (NS-SYM dataset). The general Navier-Stokes equation considered is written as,
\begin{align}
    \partial_t w(x, t) + u(x, t) \cdot \nabla w(x, t) &= \nu \Delta w(x, t) + f(x), \label{eqn: NS_def_line1}\\
    \nabla \cdot u(x, t) = 0 \hspace{4 mm} &\text{and} \hspace{4 mm} w(x, 0) = w_0(x), \nonumber
\end{align}
where $w(x, t) \in \R$ denotes the vorticity at point $(x, t)$, $w_0(x)$ is the initial velocity, $u(x, t) \in \R^2$ is the velocity at $(x, t)$, and $\nu=10^{-4}$ is the viscosity coefficient. $f$ denotes an external force affecting the dynamics of the fluid. The task here is to predict the vorticity at all points on the domain $x\in [0, 1]^2$ for some $t$, given the previous values of vorticity at all point on the domain for previous $T$ steps. As stated by \cite{helwig2023group}, when $f$ is invariant with respect to $90^{\circ}$ rotations, then the solution is equivariant, otherwise not. We use the same forces $f$ as \cite{helwig2023group}. For non-invariant force, we use $f(x_1, x_2) = 0.1(\sin{(2\pi (x_1 + x_2))} + \cos{(2\pi (x_1 + x_2))})$ and as invariant force, we use $f_{inv} = 0.1(\cos{(4\pi x_1)} + \cos{(4\pi x_2)})$. We use $T=20$ previous steps as inputs for the NS dataset and $T=10$ for NS-SYM and predict for $t=T+1$, same as in \cite{helwig2023group}. We train our models with batch size 20 and learning rate $10^{-3}$ for 100 epochs.

\textbf{Model design:} We use the FNO and G-FNO models directly from \cite{helwig2023group}. And we construct G-RepsFNO by directly using $T_1$ representation corresponding to the C$_4$ group of 90$^{\circ}$ rotations for all the features. Further, G-RepsFNO uses FNO as the base model. 

\textbf{Results and Observations:} In Tab.~\ref{tab:FNO}, we find that G-RepsFNO clearly outperforms traditional FNOs on both datasets NS and NS-SYM. Note that the NS dataset does not have rot90 symmetries and yet G-RepsFNOs outperform FNOs showing that using equivariant representations may be more expressive for tasks without any obvious symmetries as was also noted in several works such as \cite{cohen2016group, helwig2023group}. Moreover, we find that the G-RepsFNO models perform competitively with the more sophisticated, recently proposed, G-FNOs. Thus, we gain benefits of equivariance by directly using equivariant representations on non-equivariant base models and making minimal changes to the architecture. Further, in Tab.~\ref{tab:FNO_time} we show that G-RepsFNOs are computationally much more efficient than the more sophisticated G-FNOs.

\section{Conclusion}\label{sec:conclusion}
We present G-RepsNet, a lightweight yet expressive architecture designed to provide equivariance to arbitrary matrix groups.
We find that G-RepsNet gives competitive performance to EMLP on various invariant and equivariant regression tasks taken from \cite{finzi2021practical}, at much less computational expense. 
For image classification, we find that G-RepsNet with second-order tensors outperforms existing equivariant models such as GCNNs and $E(2)$-CNNs as well as methods such as equitune when trained using pretrained models such as Resnet. 
Further illustrating the simplicity and generality of our design, we show that using simple first-order tensor representations in G-RepsNet achieves competitive performance to specially designed equivariant networks for several different domains. We considered diverse domains such as PDE solving and $N$-body dynamics prediction using FNOs and MPNNs, respectively, as the base model.
\newpage
\bibliography{GRepsNet_ICML}

\begin{thebibliography}{33}
\providecommand{\natexlab}[1]{#1}
\providecommand{\url}[1]{\texttt{#1}}
\expandafter\ifx\csname urlstyle\endcsname\relax
  \providecommand{\doi}[1]{doi: #1}\else
  \providecommand{\doi}{doi: \begingroup \urlstyle{rm}\Url}\fi

\bibitem[Basu et~al.(2023{\natexlab{a}})Basu, Katdare, Sattigeri, Chenthamarakshan, Driggs-Campbell, Das, and Varshney]{basu2023equivariant}
Basu, S., Katdare, P., Sattigeri, P., Chenthamarakshan, V., Driggs-Campbell, K.~R., Das, P., and Varshney, L.~R.
\newblock Efficient equivariant transfer learning from pretrained models.
\newblock In \emph{Thirty-seventh Conference on Neural Information Processing Systems}, 2023{\natexlab{a}}.

\bibitem[Basu et~al.(2023{\natexlab{b}})Basu, Sattigeri, Ramamurthy, Chenthamarakshan, Varshney, Varshney, and Das]{basu2023equi}
Basu, S., Sattigeri, P., Ramamurthy, K.~N., Chenthamarakshan, V., Varshney, K.~R., Varshney, L.~R., and Das, P.
\newblock Equi-tuning: Group equivariant fine-tuning of pretrained models.
\newblock In \emph{Proceedings of the AAAI Conference on Artificial Intelligence}, 2023{\natexlab{b}}.

\bibitem[Bogatskiy et~al.(2020)Bogatskiy, Anderson, Offermann, Roussi, Miller, and Kondor]{bogatskiy2020lorentz}
Bogatskiy, A., Anderson, B., Offermann, J., Roussi, M., Miller, D., and Kondor, R.
\newblock Lorentz group equivariant neural network for particle physics.
\newblock In \emph{International Conference on Machine Learning}, pp.\  992--1002. PMLR, 2020.

\bibitem[Brandstetter et~al.(2022)Brandstetter, Hesselink, van~der Pol, Bekkers, and Welling]{brandstetter2022geometric}
Brandstetter, J., Hesselink, R., van~der Pol, E., Bekkers, E.~J., and Welling, M.
\newblock Geometric and physical quantities improve e (3) equivariant message passing.
\newblock In \emph{International Conference on Learning Representations}, 2022.

\bibitem[Cohen \& Welling(2016{\natexlab{a}})Cohen and Welling]{cohen2016group}
Cohen, T. and Welling, M.
\newblock Group equivariant convolutional networks.
\newblock In \emph{International Conference on Machine Learning}, pp.\  2990--2999. PMLR, 2016{\natexlab{a}}.

\bibitem[Cohen \& Welling(2016{\natexlab{b}})Cohen and Welling]{cohen2016steerable}
Cohen, T.~S. and Welling, M.
\newblock Steerable cnns.
\newblock In \emph{International Conference on Learning Representations}, 2016{\natexlab{b}}.

\bibitem[Deng et~al.(2021)Deng, Litany, Duan, Poulenard, Tagliasacchi, and Guibas]{deng2021vector}
Deng, C., Litany, O., Duan, Y., Poulenard, A., Tagliasacchi, A., and Guibas, L.~J.
\newblock Vector neurons: A general framework for so (3)-equivariant networks.
\newblock In \emph{Proceedings of the IEEE/CVF International Conference on Computer Vision}, pp.\  12200--12209, 2021.

\bibitem[Finzi et~al.(2021)Finzi, Welling, and Wilson]{finzi2021practical}
Finzi, M., Welling, M., and Wilson, A.~G.
\newblock A practical method for constructing equivariant multilayer perceptrons for arbitrary matrix groups.
\newblock In \emph{International Conference on Machine Learning}, pp.\  3318--3328. PMLR, 2021.

\bibitem[Fuchs et~al.(2020)Fuchs, Worrall, Fischer, and Welling]{fuchs2020se}
Fuchs, F., Worrall, D., Fischer, V., and Welling, M.
\newblock Se (3)-transformers: 3d roto-translation equivariant attention networks.
\newblock \emph{Advances in Neural Information Processing Systems}, 33:\penalty0 1970--1981, 2020.

\bibitem[Gilmer et~al.(2017)Gilmer, Schoenholz, Riley, Vinyals, and Dahl]{gilmer2017neural}
Gilmer, J., Schoenholz, S.~S., Riley, P.~F., Vinyals, O., and Dahl, G.~E.
\newblock Neural message passing for quantum chemistry.
\newblock In \emph{International Conference on Machine Learning}, pp.\  1263--1272. PMLR, 2017.

\bibitem[Helwig et~al.(2023)Helwig, Zhang, Fu, Kurtin, Wojtowytsch, and Ji]{helwig2023group}
Helwig, J., Zhang, X., Fu, C., Kurtin, J., Wojtowytsch, S., and Ji, S.
\newblock Group equivariant fourier neural operators for partial differential equations.
\newblock In \emph{International Conference on Machine Learning}, 2023.

\bibitem[Hornik et~al.(1989)Hornik, Stinchcombe, and White]{hornik1989multilayer}
Hornik, K., Stinchcombe, M., and White, H.
\newblock Multilayer feedforward networks are universal approximators.
\newblock \emph{Neural networks}, 2\penalty0 (5):\penalty0 359--366, 1989.

\bibitem[Jumper et~al.(2021)Jumper, Evans, Pritzel, Green, Figurnov, Ronneberger, Tunyasuvunakool, Bates, {\v{Z}}{\'\i}dek, Potapenko, et~al.]{jumper2021highly}
Jumper, J., Evans, R., Pritzel, A., Green, T., Figurnov, M., Ronneberger, O., Tunyasuvunakool, K., Bates, R., {\v{Z}}{\'\i}dek, A., Potapenko, A., et~al.
\newblock Highly accurate protein structure prediction with alphafold.
\newblock \emph{Nature}, 596\penalty0 (7873):\penalty0 583--589, 2021.

\bibitem[Kaba et~al.(2023)Kaba, Mondal, Zhang, Bengio, and Ravanbakhsh]{kaba2023equivariance}
Kaba, S.-O., Mondal, A.~K., Zhang, Y., Bengio, Y., and Ravanbakhsh, S.
\newblock Equivariance with learned canonicalization functions.
\newblock In \emph{International Conference on Machine Learning}, pp.\  15546--15566. PMLR, 2023.

\bibitem[Kim et~al.(2023)Kim, Nguyen, Suleymanzade, An, and Hong]{kim2023learning}
Kim, J., Nguyen, D.~T., Suleymanzade, A., An, H., and Hong, S.
\newblock Learning probabilistic symmetrization for architecture agnostic equivariance.
\newblock In \emph{Thirty-seventh Conference on Neural Information Processing Systems}, 2023.
\newblock URL \url{https://openreview.net/forum?id=phnN1eu5AX}.

\bibitem[K{\"o}hler et~al.(2019)K{\"o}hler, Klein, and No{\'e}]{kohler2019equivariant}
K{\"o}hler, J., Klein, L., and No{\'e}, F.
\newblock Equivariant flows: sampling configurations for multi-body systems with symmetric energies.
\newblock \emph{arXiv preprint arXiv:1910.00753}, 2019.

\bibitem[Kondor \& Trivedi(2018)Kondor and Trivedi]{kondor2018generalization}
Kondor, R. and Trivedi, S.
\newblock On the generalization of equivariance and convolution in neural networks to the action of compact groups.
\newblock In \emph{International Conference on Machine Learning}, pp.\  2747--2755. PMLR, 2018.

\bibitem[Kondor et~al.(2018)Kondor, Son, Pan, Anderson, and Trivedi]{kondor2018covariant}
Kondor, R., Son, H.~T., Pan, H., Anderson, B., and Trivedi, S.
\newblock Covariant compositional networks for learning graphs.
\newblock \emph{arXiv preprint arXiv:1801.02144}, 2018.

\bibitem[LeCun et~al.(1989)LeCun, Boser, Denker, Henderson, Howard, Hubbard, and Jackel]{lecun1989backpropagation}
LeCun, Y., Boser, B., Denker, J.~S., Henderson, D., Howard, R.~E., Hubbard, W., and Jackel, L.~D.
\newblock Backpropagation applied to handwritten zip code recognition.
\newblock \emph{Neural computation}, 1\penalty0 (4):\penalty0 541--551, 1989.

\bibitem[Li et~al.(2021)Li, Kovachki, Azizzadenesheli, Liu, Bhattacharya, Stuart, and Anandkumar]{li2020fourier}
Li, Z., Kovachki, N., Azizzadenesheli, K., Liu, B., Bhattacharya, K., Stuart, A., and Anandkumar, A.
\newblock Fourier neural operator for parametric partial differential equations.
\newblock In \emph{International Conference on Learning Representations}, 2021.

\bibitem[Mao et~al.(2023)Mao, Zhang, Joshi, Yang, Wang, and Vondrick]{mao2023robust}
Mao, C., Zhang, L., Joshi, A., Yang, J., Wang, H., and Vondrick, C.
\newblock Robust perception through equivariance.
\newblock In \emph{International Conference on Machine Learning}, 2023.

\bibitem[Mondal et~al.(2023)Mondal, Panigrahi, Kaba, Rajeswar, and Ravanbakhsh]{mondal2023equivariant}
Mondal, A.~K., Panigrahi, S.~S., Kaba, S.-O., Rajeswar, S., and Ravanbakhsh, S.
\newblock Equivariant adaptation of large pretrained models.
\newblock In \emph{Thirty-seventh Conference on Neural Information Processing Systems}, 2023.

\bibitem[Puny et~al.(2021)Puny, Atzmon, Smith, Misra, Grover, Ben-Hamu, and Lipman]{puny2021frame}
Puny, O., Atzmon, M., Smith, E.~J., Misra, I., Grover, A., Ben-Hamu, H., and Lipman, Y.
\newblock Frame averaging for invariant and equivariant network design.
\newblock In \emph{International Conference on Learning Representations}, 2021.

\bibitem[Ravanbakhsh et~al.(2017)Ravanbakhsh, Schneider, and Poczos]{ravanbakhsh2017equivariance}
Ravanbakhsh, S., Schneider, J., and Poczos, B.
\newblock Equivariance through parameter-sharing.
\newblock In \emph{International Conference on Machine Learning}, pp.\  2892--2901. PMLR, 2017.

\bibitem[Satorras et~al.(2021)Satorras, Hoogeboom, and Welling]{satorras2021n}
Satorras, V.~G., Hoogeboom, E., and Welling, M.
\newblock E (n) equivariant graph neural networks.
\newblock In \emph{International Conference on Machine Learning}, pp.\  9323--9332. PMLR, 2021.

\bibitem[Thomas et~al.(2018)Thomas, Smidt, Kearnes, Yang, Li, Kohlhoff, and Riley]{thomas2018tensor}
Thomas, N., Smidt, T., Kearnes, S., Yang, L., Li, L., Kohlhoff, K., and Riley, P.
\newblock Tensor field networks: Rotation-and translation-equivariant neural networks for 3d point clouds.
\newblock \emph{arXiv preprint arXiv:1802.08219}, 2018.

\bibitem[Van~der Pol et~al.(2020)Van~der Pol, Worrall, van Hoof, Oliehoek, and Welling]{van2020mdp}
Van~der Pol, E., Worrall, D., van Hoof, H., Oliehoek, F., and Welling, M.
\newblock Mdp homomorphic networks: Group symmetries in reinforcement learning.
\newblock \emph{Advances in Neural Information Processing Systems}, 33:\penalty0 4199--4210, 2020.

\bibitem[Villar et~al.(2021)Villar, Hogg, Storey-Fisher, Yao, and Blum-Smith]{villar2021scalars}
Villar, S., Hogg, D.~W., Storey-Fisher, K., Yao, W., and Blum-Smith, B.
\newblock Scalars are universal: Equivariant machine learning, structured like classical physics.
\newblock \emph{Advances in Neural Information Processing Systems}, 34:\penalty0 28848--28863, 2021.

\bibitem[Weiler \& Cesa(2019)Weiler and Cesa]{weiler2019general}
Weiler, M. and Cesa, G.
\newblock General {E} (2)-equivariant steerable cnns.
\newblock \emph{Advances in neural information processing systems}, 32, 2019.

\bibitem[Weyl(1946)]{weyl1946classical}
Weyl, H.
\newblock \emph{The classical groups: their invariants and representations}, volume~45.
\newblock Princeton university press, 1946.

\bibitem[Worrall et~al.(2017)Worrall, Garbin, Turmukhambetov, and Brostow]{worrall2017harmonic}
Worrall, D.~E., Garbin, S.~J., Turmukhambetov, D., and Brostow, G.~J.
\newblock Harmonic networks: Deep translation and rotation equivariance.
\newblock In \emph{Proceedings of the IEEE conference on computer vision and pattern recognition}, pp.\  5028--5037, 2017.

\bibitem[Yarotsky(2022)]{yarotsky2022universal}
Yarotsky, D.
\newblock Universal approximations of invariant maps by neural networks.
\newblock \emph{Constructive Approximation}, 55\penalty0 (1):\penalty0 407--474, 2022.

\bibitem[Zaheer et~al.(2017)Zaheer, Kottur, Ravanbakhsh, Poczos, Salakhutdinov, and Smola]{zaheer2017deep}
Zaheer, M., Kottur, S., Ravanbakhsh, S., Poczos, B., Salakhutdinov, R.~R., and Smola, A.~J.
\newblock Deep sets.
\newblock \emph{Advances in neural information processing systems}, 30, 2017.

\end{thebibliography}
\bibliographystyle{icml2024}

\newpage
\appendix
\onecolumn
\section{Additional Definitions}\label{app_sec:additional_definition}
A \textbf{group} is a set $G$ along with a binary operator `$\cdot$', such that the axioms of a group are satisfied: a) closure: $g_1 \cdot g_2 \in G$ for all $g_1, g_2 \in G$, b) associativity: $(g_1\cdot g_2)\cdot g_3 = g_1 \cdot (g_2 \cdot g_3)$ for all $g_1, g_2, g_3 \in G$, c) identity: there exists $e \in G$ such that $e\cdot g = g \cdot e = g$ for any $g \in G$, d) inverse: for every $g \in G$ there exists $g^{-1} \in G$ such that $g \cdot g^{-1} = g^{-1} \cdot g = e$.

For a given set $\mathcal{X}$, a \textbf{group action} of a group $G$ on $\mathcal{X}$ is defined via a map $\alpha: G \times \mathcal{X} \mapsto \mathcal{X}$ such that $\alpha(e, x) = x$ for all $x \in \mathcal{X}$, and $\alpha (g_1, \alpha (g_2, x)) = \alpha (g_1 \cdot g_2, x)$ for all $g_1, g_2 \in G$ and $x \in \mathcal{X}$, where $e$ is the identity element of $G$. When clear from context, we write $\alpha(g, x)$ simply as $gx$

Given a function $f:\mathcal{X} \mapsto \mathcal{Y}$, we call the function $f$ to be $G$-\textbf{equivariant} if $f(gx) = gf(x)$ for all $g \in G$ and $x \in \mathcal{X}$.

\section{Additional Details on Related Works}\label{sec:additional_details_related_works}

\subsection{EMLPs and Universal Scalars}\label{subsec:emlps_and_universal_Scalars}
\paragraph{EMLPs} Given the input and output types for some matrix group, the corresponding tensor representations can be derived from the given base group representation $\rho$. Using these tensor representations, one can solve for the space of linear equivariant functions directly from the obtained equivariant constraints corresponding to the tensor representations. \cite{finzi2021practical} propose an elegant solution to solve these constraints by computing the basis of the linear equivariant space and construct an equivariant MLP (EMLP) from the computed basis. Our work is closest to this work as we use the same data representations as \cite{finzi2021practical}, but we propose a much simpler architecture for equivariance to arbitrary matrix groups. Because of the simplicity of our approach, we are able to use it for several larger datasets, which is in contrast to \cite{finzi2021practical}, where the experiments are mostly restricted to synthetic experiments. Moreover, using these bases are in general known to be computationally expensive~\citep{fuchs2020se}.

\paragraph{Universal scalars}
\cite{villar2021scalars} propose a method to circumvent the need to explicitly use these equivariant bases. The First Fundamental Theorem of Invariant Theory for the Euclidean group $O(d)$ states that ``a function of vector inputs returns an invariant scalar if and only if it can be written as a function only of the invariant scalar products of the input vectors"~\citep{weyl1946classical}. Taking inspiration from this theorem and a related theorem for equivariant vector functions, \cite{villar2021scalars} characterize the equivariant functions for various Euclidean and Non-Euclidean groups. They further motivate the construction of neural networks taking the invariant scalar products of given tensor data as inputs. However, the number of invariant scalars for $N$ tensors in a data point grows as $N^2$, hence, making it an impractical method for most real life machine learning datasets. Hence, their experiments are also mostly restricted to synthetic datasets like in EMLP.

Moreover, \citet[\S.~5]{villar2021scalars} shows that even though the number of resulting scalars grows proportional to $N^2$, when the data is of dimension $d$, approximately $N\times (d+1)$ number of these scalars is sufficient to construct the invariant function. But, it might not be trivial to find this subset of scalar for real life datasets such as images. Hence, we propose to use deeper networks with equivariant features that directly take the $N$ tensors as input, instead of $N^2$ scalar inputs, which also circumvent the need to use equivariant bases. Additional related works and comparisons are in \S.~\ref{subsec:related_works}.

\subsection{Special Cases and Related Designs}\label{app_subsec:method_special_cases}
\begin{figure}
    \centering
        \begin{subfigure}{0.3\textwidth}
      \centering
      \includegraphics[width=\textwidth]{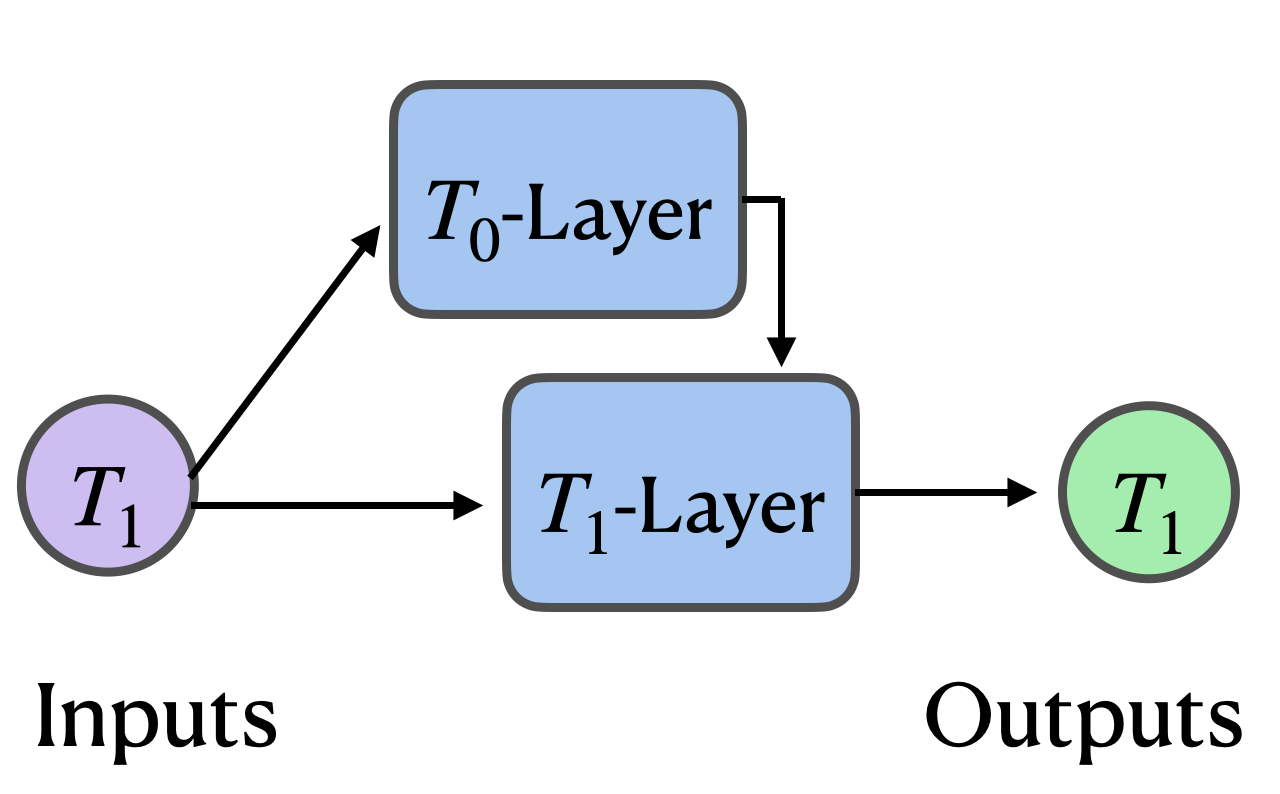}
      \caption{Vector neurons (O(3) only)}
      \label{fig:vector_neurons_diag}
    \end{subfigure}\hspace{30mm}
        \begin{subfigure}{0.3\textwidth}
      \centering
      \includegraphics[width=\textwidth]{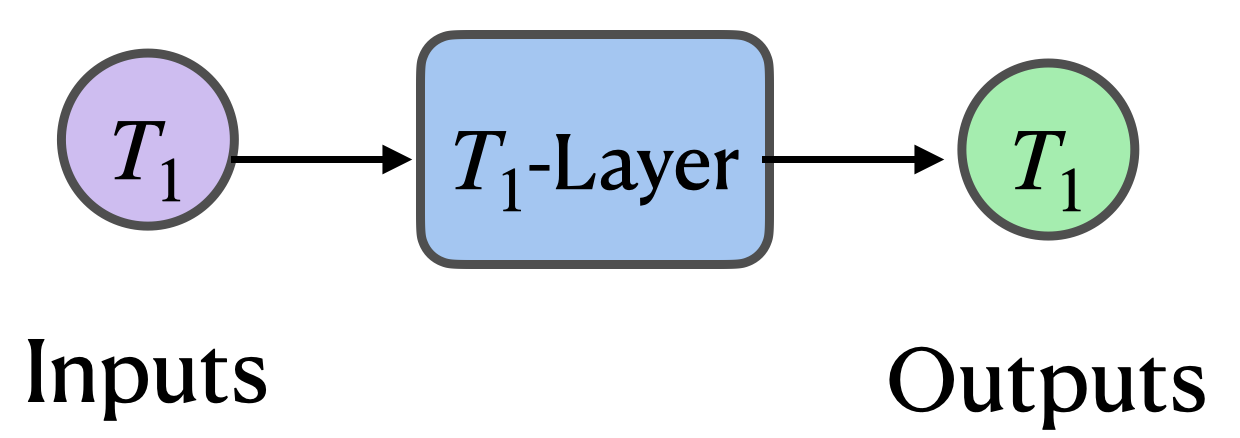}
      \caption{Equitune}
      \label{fig:equitune_diag}
    \end{subfigure}
    \caption{
    (a) and (b) show layers from vector neurons~\cite{deng2021vector} and equitune~\cite{basu2023equi}, which are special cases of GRepsNet..
    }
    \label{fig:architecture_diag}
\end{figure}
Here, we look at existing group equivariant architectures popular for their simplicity that are special cases or closely related to our general design. 

\textbf{Vector neurons} Popular for its lightweight $SO(3)$-equivariant applications such as point cloud, the vector neurons~\citep{deng2021vector} serve as a classic example of special cases of our design as illustrated in Fig.~\ref{fig:vector_neurons_diag}. 
Their $T_1$-layer simply consists of a linear combination $T_1$ inputs without bias terms, same as ours. The $T_0$-layer first converts the $T_1$ tensors into $T_0$ tensors by taking inner products. Then, pointwise non-linearities are applied to the $T_0$ tensor and then mixed with the $T_1$  tensors, by multiplying them with $T_1$ tensors and further linearly mixing the $T_1$ tensors.

\textbf{Harmonic networks} Harmonic networks or H-nets~\cite{worrall2017harmonic} employ a similar architecture to ours and vector neurons, but specialized for the $SO(2)$ group. They also take as input $T_1$ inputs, then obtain the $T_0$ scalars by computing the Euclidean norms of the inputs. All non-linearities are applied only to the scalars. The $T_1$ tensors are processed using linear circular cross-correlations that preserve equivariance. Further, higher order tensors are obtained by chained-cross correlations. 

The use of cross-correlations is very different from our design, but it is designed in a similar spirit of building tensors of various orders and construct simple, yet expressive equivariant features.

\textbf{Equitune} Finally, recent works on frame-averaging such as equitune and related symmetrization techniques~\cite{basu2023equivariant, kim2023learning, kaba2023equivariance, mondal2023equivariant} construct equivariant architecture by performing some sort of averaging over groups. This can be seen as using a regular $T_1$ representation as the input and output type as illustrated in Fig.~\ref{fig:equitune_diag}. These works have mainly focused on exploring the potential of equivariance in pretrained models. In this work, we further explore the capabilities of regular $T_1$ representations and find their surprising benefits in equivariant tasks. Moreover, this also inspires us to explore beyond regular $T_1$ representations, e.g., we find $T_2$ representations can yield better results than $T_1$ representations when used in the final layers of a model for image classification.

\section{Proof of Equivariance}\label{app_sec:proof_of_equivariance}
Here we provide the proof of equivariance of a G-RepsNet layer to matrix groups. Further, since stacking equivariant layers preserve the equivariance of the resulting model, the equivariance of the G-RepsNet model follows directly. The argument is similar to the proof of equivariance of vector neurons to the SO$(3)$ group.

First, consider regular representation. Note from \S.~\ref{sec: G-RepsNet_Architecture} that the group dimension is treated like a batch dimension in regular representations for discrete groups. Thus, any permutation in the input naturally appears in the output, hence, producing equivariant output.

Now we consider non-regular representations. Assuming that the input to a G-RepsNet layer consists of tensors of types $T_0, T_1, \ldots, T_n$, we first note that the output of the $T_0$-layer in Fig.~\ref{fig:grepsnet_architecture} is invariant, following which we find that the $T_i$-layer outputs equivariant $T_i$ tensors.

The output of the $T_0$-layer is clearly invariant since all the inputs to the network are of type $T_0$, which are already invariant.

Now, we focus on a $T_i$-layer. Recall from \S.~\ref{sec: G-RepsNet_Architecture} that a $T_i$ layer consists only of linear networks without any bias terms or pointwise non-linearities. Suppose the linear network is given by a stack of linear matrices. We show that any such linear combination performed by a matrix preserves equivariance, hence, stacking these matrices would still preserve equivariance of the output. Let the input tensor of type $T_i$ be $X \in \R^{c\times k}$, i.e., we have $c$ tensors of type $T_i$ and size of the representation of each tensor equals to $k$. Consider a matrix $W \in \R^{c'\times c}$, which multiplied with $X$ gives $Y = W\times X \in \R^{c'\times k}$, where $Y$ is a linear combination of the $c$ input tensors each of type $T_i$. Let the group transformation on the tensor $T_i$ be given by $G \in \R^{k\times k}$. Then the group transformed input is given by $X' = X \times G \in \R^{c\times k}$. The output of $X'$ through the $T_i$-layer is given by $Y' = W \times X \times G \in \R^{c'\times k} = (W \times X)  \times G = Y \times G$, where the second last equality follows from the associativity property of matrix multiplication. Thus, each $T_i$-layer is equivariant.

\section{On the Universality of the G-RepsNet Architecture}\label{sec: universal}
\begin{proof}[Proof to Thm.~\ref{thm:GReps_scalar_universal_proof}]
    Let the tensors of type $i$ at layer $l$ be written as $H_i^l$. Given input $(X_1, X_2, \ldots, X_n) \in \R^d$ of type $T_1$, we construct a G-RepsNet architecture that can approximate $h$ by taking help from the approximation properties of a multi-layered perceptron~\cite{hornik1989multilayer}.

    Let the first layer consist only of $T_1$-layers, i.e., linear layers without any bias terms such that the obtained hidden layer $H_1^1$ is of dimension $\R^{d \times (n^2+n)}$ and consists of the $T_1$ tensors $X_i + X_j$ for all $i, j \in \{1, \ldots, n\}$ and $X_i$ for all $i \in \{1, \ldots, n\}$. This can be obtained by a simple linear combination. Now, construct the second layer by first taking the norm of all the $T_1$ tensors, which gives $\norm{X_i} + \norm{X_j} + 2 \times \langle X_i, X_j \rangle$ for all $i, j \in \{1, \ldots, n\}$ and $\norm{X_i}$ for all $i \in \{1, \ldots, n\}$. Then, using a simple linear combination of the converted $T_0$ tensors give $\langle X_i, X_j \rangle$ for all $i, j \in \{1, \ldots, n\}$. Finally, passing $\langle X_i, X_j \rangle$ for all $i, j \in \{1, \ldots, n\}$ through an MLP gives $H_0^2$. Now, from the universal approximation capability of MLPs, it can approximate $f$ from \eqref{eqn:first_fundamental}. Thus, we obtain the function $h$ from Lem.~\ref{lem:first_fundamental}.

\end{proof}
Now, recall from \cite{villar2021scalars}, a statement similar to Lem.~\ref{lem:first_fundamental}, but for vector functions.
\begin{lemma}[\cite{villar2021scalars}]\label{lem:second_observation}
    A function of vector inputs returns an equivariant vector if and only if it can be written as a linear combination of invariant scalar functions times the input vectors. That is, given input vectors $(X_1, X_2, \ldots, X_n)$, $X_i \in \R^d$, any equivariant vector function $h: \R^{d \times n} \mapsto \R^d$ can be written as 
    \begin{align}
        h(X_1, X_2, \ldots, X_n) = \sum_{t=1}^n f_t(\langle X_i, X_j \rangle_{i, j=1}^n) X_t, \label{eqn:first_fundamental}
    \end{align}
    where $\langle X_i, X_j \rangle$ denotes the inner product between $X_i$ and $X_j$, and $f_t$s are some arbitrary functions.
\end{lemma}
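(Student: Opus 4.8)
The \emph{if} direction is immediate and I would dispatch it first: for any $g \in O(d)$ the Gram entries $\langle X_i, X_j\rangle$ are invariant, so each coefficient $f_t(\langle X_i, X_j\rangle_{i,j=1}^n)$ is unchanged under $X_k \mapsto gX_k$, while every input vector transforms as $X_t \mapsto gX_t$; hence $\sum_t f_t(\cdots)X_t \mapsto g\sum_t f_t(\cdots)X_t$, which is exactly $h(gX) = gh(X)$. The substance is the \emph{only if} direction, and my plan proceeds in three stages, the first of which is the geometric heart of the argument.

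First I would show that \emph{the output never leaves the span of the inputs}, i.e. $h(X) \in V := \mathrm{span}(X_1,\ldots,X_n)$. Let $G_V \le O(d)$ be the subgroup fixing $V$ pointwise; every $g \in G_V$ satisfies $gX_i = X_i$ for all $i$, so equivariance gives $g\,h(X) = h(gX_1,\ldots,gX_n) = h(X)$, meaning $h(X)$ is fixed by all of $G_V$. It then remains to verify that the common fixed space of $G_V$ is exactly $V$: if $\dim V^\perp \ge 2$, the rotations of $V^\perp$ collectively fix only $0$ there, while if $\dim V^\perp = 1$ the map acting as $-1$ on $V^\perp$ and as the identity on $V$ lies in $O(d)$ and fixes no nonzero vector of $V^\perp$ (this is the one place where it is essential to work with the full orthogonal group rather than $SO(d)$). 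In either case a vector fixed by $G_V$ can carry no $V^\perp$-component, so $h(X)\in V$, and we may write $h(X)=\sum_{t=1}^n \alpha_t(X)\,X_t$ for some coefficients $\alpha_t(X)$.

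Second, I would force those coefficients to depend only on inner products. Writing $A=[\,X_1\mid\cdots\mid X_n\,]\in\R^{d\times n}$ and $M=A^\top A$ for the Gram matrix (so $M_{ij}=\langle X_i,X_j\rangle$), define the scalars $s_t(X):=\langle X_t,h(X)\rangle$. Each $s_t$ is an \emph{invariant} scalar function of the inputs, since $\langle gX_t, g\,h(X)\rangle=\langle X_t,h(X)\rangle$ for orthogonal $g$, so Lemma~\ref{lem:first_fundamental} supplies functions $\phi_t$ with $s_t(X)=\phi_t(\langle X_i,X_j\rangle_{i,j})$. Collecting $s=(s_1,\ldots,s_n)^\top=A^\top h(X)$, the representation $h(X)=A\alpha$ yields $M\alpha = A^\top h(X)=s$. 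To solve for $\alpha$ in a manner manifestly expressible through $M$ and $s$ even when the $X_i$ are linearly dependent, I would set $\alpha := M^{+}s$ with $M^{+}$ the Moore--Penrose pseudoinverse; using $A^{+}=M^{+}A^\top$ one checks $A\alpha = AM^{+}A^\top h(X)=AA^{+}h(X)$, the orthogonal projection of $h(X)$ onto $\mathrm{col}(A)=V$, which equals $h(X)$ by the first stage. Since $M^{+}$ is a well-defined function of $M$ and each $s_t$ is a function of the Gram entries, $f_t(\langle X_i,X_j\rangle_{i,j}):=(M^{+}s)_t$ gives the desired invariant coefficients.

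I expect the main obstacle to be the first stage, establishing that $h(X)$ cannot escape $\mathrm{span}(X_1,\ldots,X_n)$, together with its interaction with the degenerate case handled in the third: when the $X_i$ are linearly dependent the coefficients $\alpha_t$ are genuinely non-unique, and the pseudoinverse is precisely the device that canonically resolves this ambiguity while keeping the coefficients functions of the inner products alone. The analogous statement for $O(1,d)$ should follow by the same three stages after replacing the Euclidean inner product by the Minkowski form, the orthogonal complement by the form-orthogonal complement, and Lemma~\ref{lem:first_fundamental} by its $O(1,d)$ counterpart; the only extra care needed is that the relevant bilinear form is nondegenerate, so that both the stabilizer argument and the Gram-matrix solve remain valid.
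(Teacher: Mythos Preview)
The paper does not supply its own proof of this lemma: it is stated as a citation to \cite{villar2021scalars} and then invoked as a black box in the proof of Theorem~\ref{thm:GReps_vector_universal_proof}. So there is no in-paper argument to compare against.

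On its own merits, your $O(d)$ argument is correct and is essentially the standard one. The stabilizer step forcing $h(X)\in\mathrm{span}(X_1,\ldots,X_n)$ is the key geometric input, and your treatment of the codimension-one case (where you need the reflection in $O(d)\setminus SO(d)$) is exactly the point that distinguishes $O(d)$ from $SO(d)$. The pseudoinverse trick to extract coefficients depending only on the Gram data is clean; note that $M^{+}$ is discontinuous across rank drops, but the lemma imposes no regularity on the $f_t$, so this is harmless.

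Your $O(1,d)$ sketch is where some genuine care is still owed. Nondegeneracy of the Minkowski form on the ambient space is not the issue; the issue is that the subspace $V=\mathrm{span}(X_1,\ldots,X_n)$ can itself be degenerate (e.g.\ when it contains a null vector in its radical), in which case $V\cap V^{\perp}\neq\{0\}$ and the decomposition $\R^{1,d}=V\oplus V^{\perp}$ fails. The pointwise stabilizer $G_V$ then no longer acts as a full pseudo-orthogonal group on a complementary piece, and the claim that its fixed set is exactly $V$ requires a separate argument (typically via unipotent elements or by a limiting/perturbation device). Likewise, the identity $A^{+}=M^{+}A^{\top}$ and the interpretation of $AA^{+}$ as orthogonal projection onto $\mathrm{col}(A)$ rely on the Euclidean structure; with an indefinite form you would need to reformulate the Gram-solve step accordingly. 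None of this is fatal, but ``the only extra care needed is nondegeneracy of the form'' understates what has to be checked.
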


\begin{proof}[Proof to Thm.~\ref{thm:GReps_vector_universal_proof}]

    The proof closely follows the proof for Thm.~\ref{thm:GReps_scalar_universal_proof}. Let the tensors of type $i$ at layer $l$ be written as $H_i^l$. Given input $(X_1, X_2, \ldots, X_n) \in \R^d$ of type $T_1$, we construct a G-RepsNet architecture that can approximate $h$ by taking help from the approximation properties of a multi-layered perceptron~\cite{hornik1989multilayer}.

    Let the first layer consist only of $T_1$-layers, i.e., linear layers without any bias terms such that the obtained hidden layer $H_1^1$ is of dimension $\R^{d \times (n^2+n)}$ and consists of the $T_1$ tensors $X_i + X_j$ for all $i, j \in \{1, \ldots, n\}$ and $X_i$ for all $i \in \{1, \ldots, n\}$. This can be obtained by a simple linear combination.

    Let the second layer consist of both a $T_0$ layer and a $T_1$ layer. Let the $T_0$ layer output, $H_0^1$, be $\langle X_i, X_j \rangle$ for all $i, j \in \{1, \ldots, n\}$ and $\norm{X_i}$ for $i \in \{1, \ldots, n\}$ in a similar way as done in the proof for Thm.~\ref{thm:GReps_scalar_universal_proof}. And let the $T_1$ layer output, $H_1^1$, be $ X_i$ for $i \in \{1, \ldots, n\}$.

    Again, let the third layer also consist of a $T_0$ layer and a $T_1$ layer. Let the $T_0$ layer consist of MLPs approximating the output $\norm{X_t} \times f_t(\langle X_i, X_j \rangle_{i, j=1}^n)$ for $t \in \{1, \ldots, n\}$. Denote $\norm{X_t} \times f_t(\langle X_i, X_j \rangle_{i, j=1}^n)$ as $H_0^{3,t}$. Then, let the $T_1$ layer consist of first mixing the scalars $H_0^{3,t}$ with $X_t$ as described in Sec.~\ref{sec: G-RepsNet_Architecture} as 
    \[
    H_1^{3,t} = X_t \times \frac{H_0^{3, t}}{\norm{X_t}},
    \]
    where $H_1^{3,t}$ for $t \in \{1, \ldots, n\}$ represent the output of the $T_1$ layer of the third layer. Note that from the universal approximation properties of MLPs~\cite{hornik1989multilayer}, we get that $H_1^{3,t}$ approximates $X_t \times f_t(\langle X_i, X_j \rangle_{i, j=1}^n)$. Finally, the fourth layer consists of a single $T_1$ layer that sums the vectors $H_1^{3,t}$ for $t \in \{1, \ldots, n\}$, which combined with Lem.~\ref{lem:second_observation} concludes the proof.

\end{proof}

Thus, we find that a simple architecture can universally approximate invariant scalar and equivariant vector functions for the $O(d)$ or $O(1, d)$ groups. This is reminiscent of the universality property of a single-layered MLP. However, in practice, deep neural networks are known to have better representational capabilities than a single-layered MLP. In a similar way, in practice, we design deep equivariant networks using the G-RepsNet architecture that provides good performance on a wide range of domains.

\begin{figure}
    \centering
    \begin{subfigure}{0.33\textwidth}
      \centering
      \includegraphics[width=\textwidth]{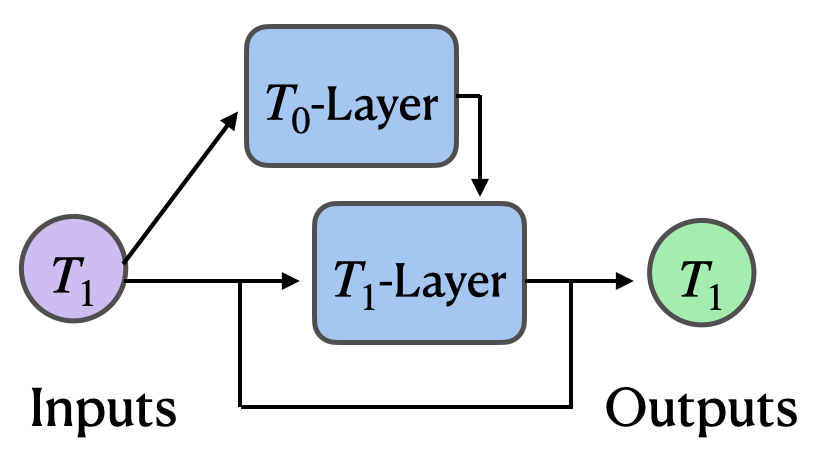}
      \caption{Simple $T_1$ residual connections}
      \label{fig:residual_t1}
    \end{subfigure}\hfill
    \begin{subfigure}{0.63\textwidth}
      \centering
      \includegraphics[width=\textwidth]{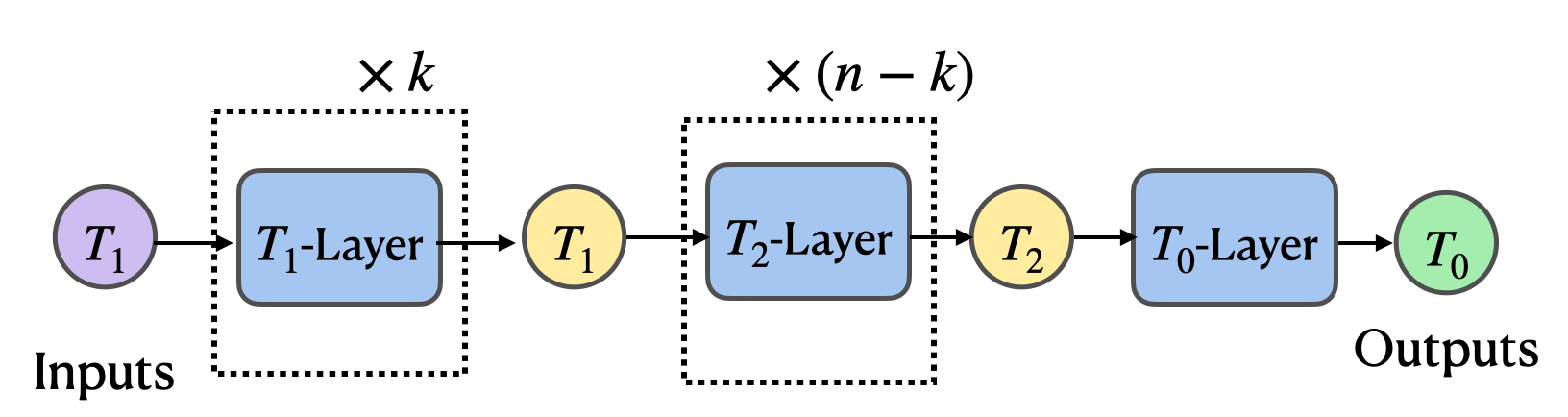}
      \caption{$T_2$-CNN design}
      \label{fig:t2_equitune}
    \end{subfigure}
    \caption{
    (a) shows a simple way to add residual connections in GRepsNet. (b) shows the architecture used for $T_2$ CNNs and equituning, where the first $k$ layers are made of $T_1$-layers to extract features, then the extracted features are converted in $T_2$ tensors, which are then processed by $T_2$-layers. Finally $T_0$ tensors, i.e., scalars are obtained as the final output.}
    \label{fig:additional_design_details}
\end{figure}

\section{Additional Model Design Details}\label{app_sec:experimental_details}

Here we provide additional experimental details for the experiments and results provided in \S.~\ref{sec: applications_and_experiments}.

\subsection{Additional Design Details for Comparison with EMLPs}\label{
subsec:emlp_model_design}

As mentioned in \S.~\ref{subsec:exp_emlps}, we consider the three synthetic experiments from \cite{finzi2021practical}: $O(5)$-invariant regression task, $O(3)$-equivariant regression task, and $O(1, 3)$-invariant task. Detailed descriptions of the network designs made for these experiments are described below.

\textbf{$O(5)$-invariant model:} The input consists of two tensors of $T_1$ type that are passed through the first layer consisting of $T_0$-layers and $T_1$-layers similar to vector neurons shown in Fig.~\ref{fig:vector_neurons_diag}, but our design differs from vector neurons in that we use simple Euclidean norm to compute the $T_0$ converted tensors instead of dot product used by vector neurons. All $T_i$ layers are made of MLPs. The number of output tensors is equal to the channel size, and the channel sizes used for our experiments in discussed in \S.~\ref{subsec:exp_emlps}. This is followed by three similar layers consisting of $T_0$-layers and $T_1$-layers, all of which take as input $T_1$ tensors, and output tensors of the same type. Additionally, these layers use residual connections as shown in Fig.~\ref{fig:residual_t1}. Finally, the $T_1$ tensors are converted to $T_0$ tensors by taking their norms, which are passed through a final $T_0$-layer that gives the output.

{More precisely, in our experiments, we consider a model with 5 learnable linear layers with no bias terms, where the dimensions of the layers are ($2 \times 100$, $100 \times 100$, $100 \times 100$, $100 \times 100$, $100 \times 1$). The input of type $2 T_1$ is of dimension ($2 \times 5$). The input is first passed through the first layer of dimension $2 \times 100$ to obtain a hidden layer output of type $100 T_1$. Then, this output is also converted to type $100 T_0$ by simply taking the norm. Thus, we have a tensor of type $100 T_0 + 100 T_1$. Finally, we convert this tensor of type $100 (T_0 + T_1)$ to $100 T_1$ by simply multiplying the $100 T_0$ scalars with the $100 T_1$ vectors. This is basically a simplified version of the tensor mixing process described in \S.~\ref{subsec: processing_tensors}. This gives a tensor of type $100 T_1$, which is the input for the next layer. We repeat the same process of converting to $T_0$ and back to $T1$ for the next two layers. For the final two layers, we convert all the tensors to scalars of type $100 T_0$ and process through the last two layers and use ReLU activation function in between.}

\textbf{$O(3)$-equivariant model:} The input consists of 5 tensors each of type $T_0 + T_1$. The first layer of our model converts them into tensors of type $T_0+T_1+T_2$.  A detailed description of the first layer follows. 

Let the input and output of the first layer be $X_{T_0}, X_{T_1}$ and $H_{T_0}, H_{T_1}, H_{T_2}$, respectively. Here, $X_{T_i}$ denotes tensors of type $T_i$ and similarly for $H_{T_i}$.  To compute $H_{T_0}$, we first convert $X_{T_1}$ to type $T_0$ by taking its norm and concatenate it to $X_{T_0}$. Let us assign this concatenated value to $H_{T_0}$. Then, the final value of $H_{T_0}$ is obtained by passing $H_{T_0}$ through two linear layers with a ReLU activation in between.

To compute $H_{T_1}$, we simply perform $W_2( H_{T_0} * W_1 (X_{T_1}) / \norm{W_1 (X_{T_1})})$ as the tensor mixing process from \S.~\ref{subsec: processing_tensors}, where $W_1, W_2$ are single linear layers with no bias terms. To compute $H_{T_2}$, we first convert $X_{T_0}$ to type $T_2$ by multiplying it with an identity matrix of dimension of $X_{T_2}$. Let us call this $H_{T_{20}}$. Then, we convert $X_{T_1}$ to type $T_2$ by taking the outer product with itself. Let us call this $H_{T_{21}}$. We concatenate $H_{T_{20}}$, $H_{T_{21}}$, and $X_{T_{2}}$, and call this $H_{T_{2}}$. Then, we update $H_{T_{2}}$ as follows. We simply perform $W_2( H_{T_0} * W_1 (H_{T_2}) / \norm{W_1 (H_{T_2})})$ as the tensor mixing process from \S.~\ref{subsec: processing_tensors}, where $W_1, W_2$ are single-layered linear layers with no bias terms. The number of tensors obtained is equal to the channel size used for the experiments discussed in \S.~\ref{subsec:exp_emlps}. 

It is followed by two layers of input and output types $T_0+T_1+T_2$.  A detailed description of the next layers follows. Let the input and output of the first layer be $X_{T_0}, X_{T_1}, X_{T_2}$ and $H_{T_0}, H_{T_1}, H_{T_2}$, respectively. Here, $X_{T_i}$ denotes tensors of type $T_i$ and similarly for $H_{T_i}$. To compute $H_{T_0}$, we first convert $X_{T_1}$ and $X_{T_2}$ to type $T_0$ by taking its norm and concatenate it to $X_{T_0}$. Let us assign this concatenated value to $H_{T_0}$. Then, the final value of $H_{T_0}$ is obtained by passing $H_{T_0}$ through two linear layers with a ReLU activation in between.

The rest of the computations for obtaining $H_{T_1}$ and $H_{T_2}$ are identical to the first layer, which is described below for completeness. To compute $H_{T_1}$, we simply perform $W_2( H_{T_0} * W_1 (X_{T_1}) / \norm{W_1 (X_{T_1})})$ as the mixing process from \S.~\ref{subsec: processing_tensors}, where $W_1, W_2$ are single linear layers with no bias terms.

To compute $H_{T_2}$, we first convert $X_{T_0}$ to type $T_2$ by multiplying it with an identity matrix of dimension of $X_{T_2}$. Let us call this $H_{T_{20}}$. Then, we convert $X_{T_1}$ to type $T_2$ by taking the outer product with itself. Let us call this $H_{T_{21}}$. We concatenate $H_{T_{20}}$, $H_{T_{21}}$, and $X_{T_{2}}$, and call this $H_{T_{2}}$. Then, we update $H_{T_{2}}$ as follows. We simply perform $W_2( H_{T_0} * W_1 (H_{T_2}) / \norm{W_1 (H_{T_2})})$ as the tensor mixing process from \S.~\ref{subsec: processing_tensors}, where $W_1, W_2$ are single-layered linear layers with no bias terms.

These layers also use residual connections similar to the ones shown in Fig.~\ref{fig:residual_t1}. Then, the $T_2$ tensors of the obtained output are passed through another $T_2$ layer, which gives the final output.

\textbf{$O(1, 3)$-invariant model} This design is identical to the design of the $O(5)$-invariant network above except for a few changes: a) the invariant tensors is obtained using Minkowski norm instead of the Euclidean norm, b) the number of channels are decided by the number of channels chosen for this specific experiment in \S.~\ref{subsec:exp_emlps}.

\section{Additional Experimental Details}\label{app_sec:additional_exp_details}

\subsection{Comparison with EMLPs}\label{app_subsec:comparing_with_emlps_additional_exp_details}
Here we provide the learning rate and model sizes used for the experiments on comparison with EMLPs in \S.~\ref{subsec:exp_emlps}.

In the $O(5)$-invariant regression task, for MLPs and EMLPs, we use a learning rate of $3 \times 10^{-3}$ and channel size $384$. Whereas for G-RepsNets, we use a learning rate of $10^{-3}$ and channel size $100$. 

For the $O(3)$-equivariant task, we use learning rate $10^{-3}$ and channel size $384$ for all the models.

For the $O(1, 3)$-invariant regression task, we use a learning rate of $3\times 10^{-3}$ for all the models. Further, we use a channel size of 384 for MLPs and EMLPs, whereas for G-RepsNets, a channel size of 100 was chosen as it gives better result. 

\subsection{Second-Order Image Classification}\label{app_subsec:second_order_image_classification}
\textbf{Training CNNs from scratch:} The CNN used for training from scratch consists of 3 convolutional layers each with kernel size 5, and output channel sizes 6, 16, and 120, respectively. Following the convolutional layers are 5 fully connected layers, each consisting of features of dimension 120. For training from scratch, we train each model for 10 epochs, using stochastic gradient descent with learning rate $10^{-3}$, momentum $0.9$. Further, we also use a stepLR learning rate scheduler with $\gamma$ 0.1, step size $7$, which reduces the learning rate by a factor of $\gamma$ after every step size number of epochs. The $T_2$ layers are computed by simply taking an outer product of the $T_1$ features at the desired layer where $T_2$ representation is introduced, following which, we simply use the same architecture as for $T_1$ representation. It is easy to verify the equivariance is maintained for both $T_1$ and $T_2$ for regular representations.

\textbf{Comparison to GCNNs and $E(2)$-CNNs when trained from scratch:} For each of the models, we use 3 convolutional layers (either naive convolutions or group convolutions) followed by 5 fully connected layers. For all variants of G-RepsCNNs, the convolutional layers consist of kernel sizes 5, channel sizes 6, 16, and 120. The linear layers are all of dimension 120 $\times$ 120 with ReLU activations in between and residual connections. All G-RepsCNNs consist of 153k parameters. For the GCNNs, we reduce the channel sizes to 3, 10, 100, and the linear layers to be of dimension 100 $\times$ 100 to adjust the number of parameters, which amounts to 175k. For $E(2)$-CNNs, we keep the channel sizes the same as G-RepsCNNs since we found the performance drop when reducing channel sizes. Thus, the resulting models for C$8$-$E(2)$CNNs and C$16$-$E(2)$CNNs have 280k and 458k params, respectively. The hyperparameters are kept the same as for training the CNNs from scratch.

\textbf{Second-Order Finetuning:} For finetuning the pretrained Resnet18, we use 5 epochs, using stochastic gradient descent with learning rate $10^{-3}$, momentum $0.9$. For equivariant finetuning with $T_2$ representations, we first extract $T_1$ featured from the pretrained model same as done for equituning~\citep{basu2023equi}, following which we convert it to $T_2$ representations using a simple outer product. Once the desired features are obtained, we pass it through two fully connected layers with a ReLU activation function in between to obtain the final classification output.

\section{Additional Results}\label{app_sec:additional_results}
\begin{table}
\centering
\caption{EMLP is computationally extremely expensive compared to MLPs and GRepsNet. Train time per epoch (in seconds) for models with the same channel size of 384 for datasets of size 1000. GRepsNets provide the same equivariance as EMLPs but at a much affordable compute cost that is comparable to MLPs. Thus, EMLPs, despite their excellent performance on equivariant tasks, is not scalable to larger datasets of practical importance.}
\label{tab:emlp_fulldata_time_comparison}
\begin{tblr}{
  cells = {c},
  cell{4}{2} = {fg=red},
  cell{4}{3} = {fg=red},
  cell{4}{4} = {fg=red},
  hlines,
  vline{2} = {-}{},
}
Model \textbackslash{} Task & $O(5)$-invariant & $O(3)$-equivariant & $SO(1, 3)$-invariant \\
MLP                         & 0.0083           & 0.0087             & 0.008               \\
GRepsNet                    & 0.013          & 0.084             & 0.049               \\
EMLP                        & 3.00           & 3.19             & 2.86             
\end{tblr}
\end{table}
\begin{figure}
    \centering
    \begin{subfigure}{0.33\textwidth}
      \centering
      \includegraphics[width=\textwidth]{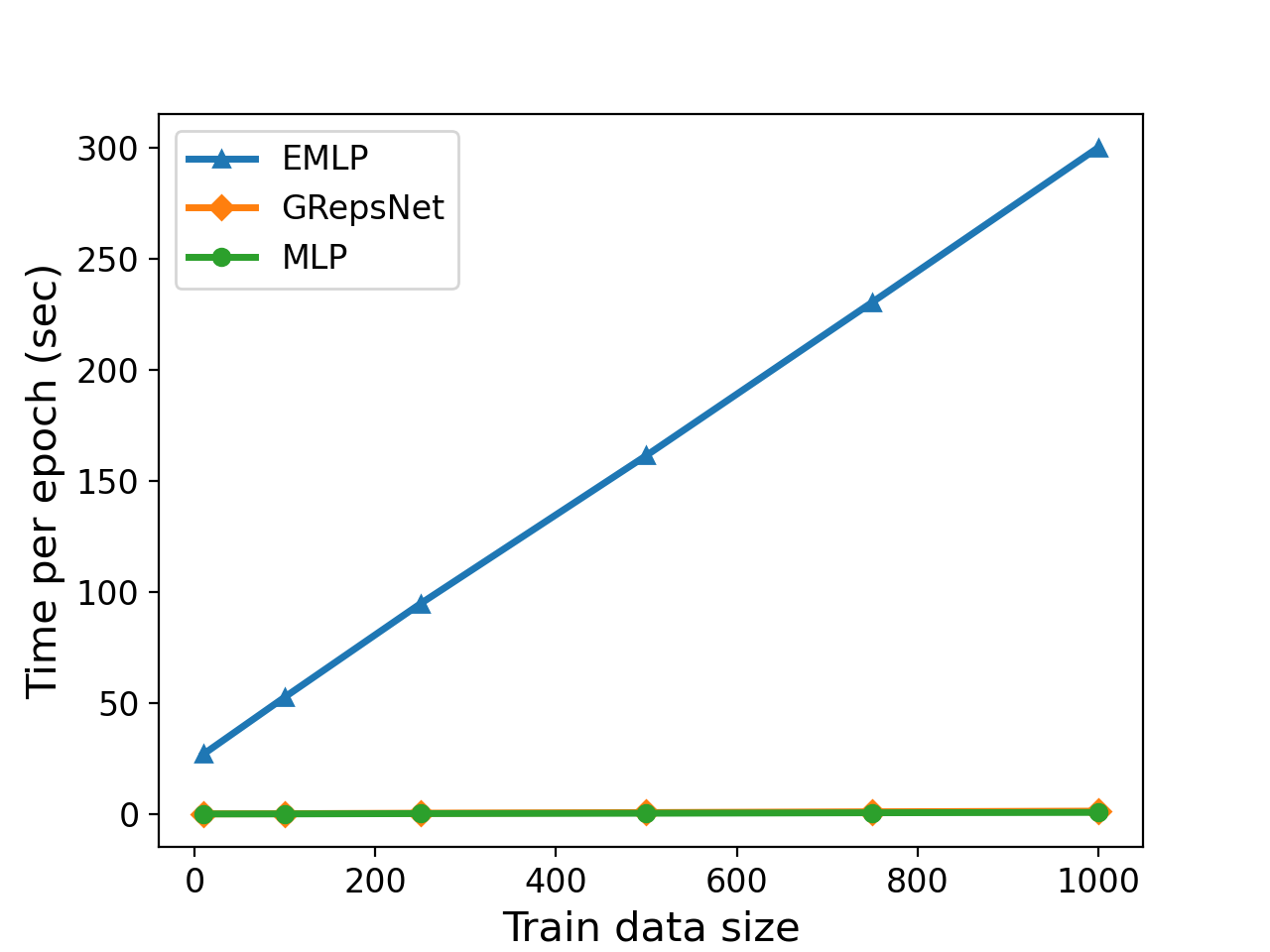}
      \caption{}
      \label{fig:o5_emlp_time}
    \end{subfigure}\hfill
    \begin{subfigure}{0.33\textwidth}
      \centering
      \includegraphics[width=\textwidth]{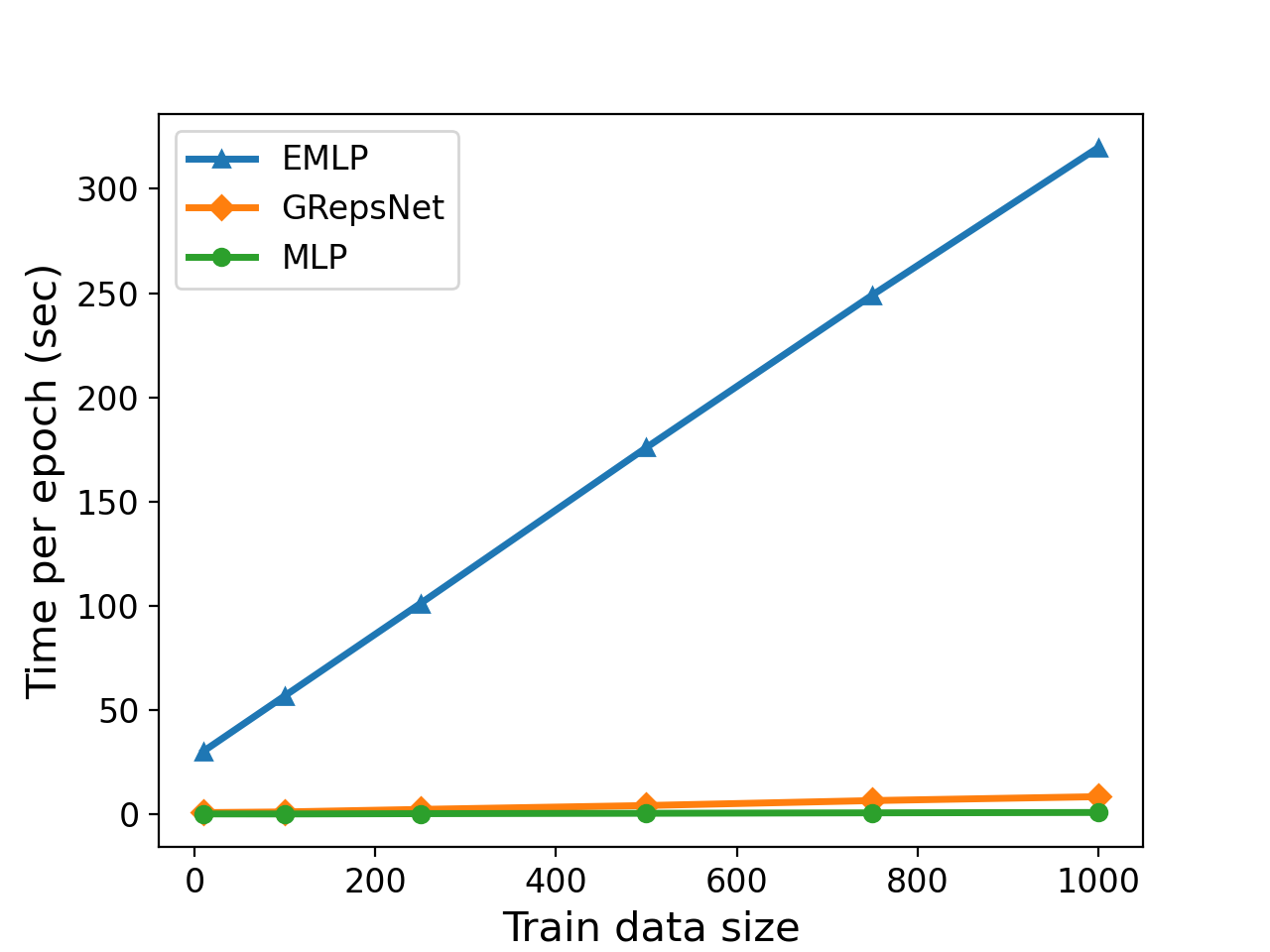}
      \caption{}
      \label{fig:o3_emlp_time}
    \end{subfigure}
    \begin{subfigure}{0.33\textwidth}
      \centering
      \includegraphics[width=\textwidth]{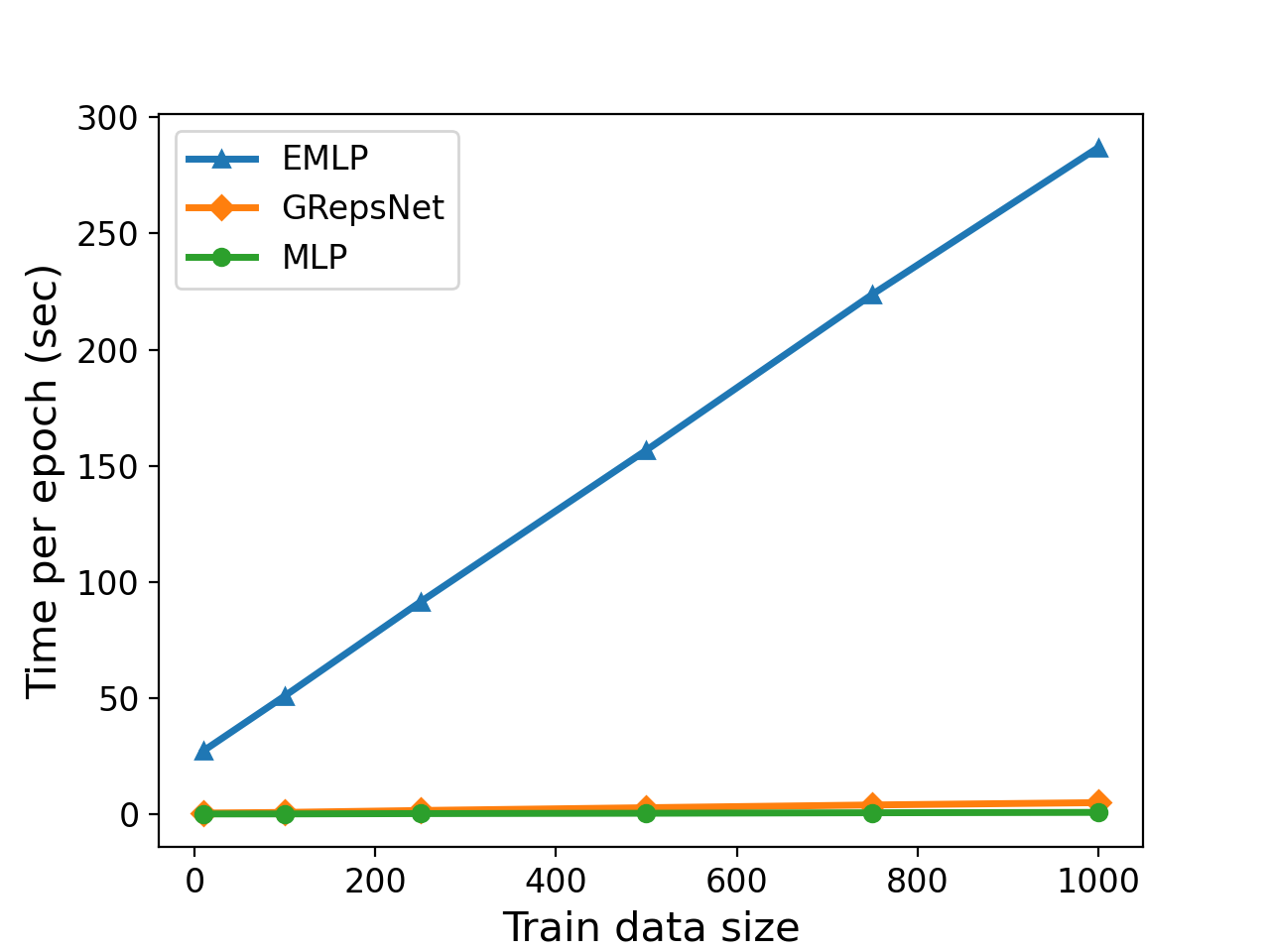}
      \caption{}
      \label{fig:so13_emlp_time}
    \end{subfigure}
    \caption{
    Times per epoch (in seconds) for different MLPs, GRepsNets, and EMLPs for varying dataset sizes. Note that MLPs and GRepsNets have comparable time per epoch, whereas EMLPs take huge amount of time. Hence, EMLPs, despite its excellent performance on equivariant tasks, is not scalable to larger datasets of practical importance.}
    \label{fig:emlp_time_comparison}
\end{figure}
\subsection{Comparison of training time with EMLP}\label{app_subsec:comparison_training_time_EMLP}Fig.~\ref{fig:emlp_time_comparison} compares the training times takes by EMLPs, G-RepsNets, and MLPs per epoch for varying dataset sizes on the three datasets considered in \S.~\ref{subsec:exp_emlps}. Results show that increasing the size of the train data significantly increases the training time for EMLPs, whereas for MLPs and GRespNets, the increase in training time, although linear, is negligible. It shows that G-RepsNet are more suitable for equivariant tasks for larger datasets.

\begin{table}
\centering
\caption{We know from previous results that EGNN is much faster than other equivariant networks such as SE (3) Transformers and outperforms them in test loss performance. The results for SEGNN are taken from \cite{brandstetter2022geometric}, rest are taken from \cite{satorras2021n}.}
\label{tab:GRepsGNN_prev}
\begin{tabular}{ccc} 
\toprule
Model                & Test Loss & Forward Time  \\ 
\hline
Linear               & 0.0819    & 0.0001        \\ 
\hline
SE (3) Transformer~\cite{fuchs2020se}   & 0.0244    & \textcolor{red}{0.1346}        \\ 
\hline
Tensor Field Network~\cite{thomas2018tensor} & 0.0155    & \textcolor{red}{0.0343}       \\ 
\hline
Graph Neural Network~\cite{gilmer2017neural} & 0.0107    & 0.0032        \\ 
\hline
Radial Field Network~\cite{kohler2019equivariant} & 0.0104    & 0.0039        \\ 
\hline
EGNN~\cite{satorras2021n}   & 0.0071    & 0.0062        \\
\hline
SEGNN~\cite{brandstetter2022geometric}   & 0.0043    & \textcolor{red}{0.0260}        \\
\bottomrule
\end{tabular}
\end{table}
\subsection{Comparison of time for forward passes in GNN models}\label{app_subsec:additional_results_GrepsGNN} 
We present the results of forward pass times for various equivariant and non-equivariant graph neural network models in Tab.~\ref{tab:GRepsGNN_prev} taken directly from \cite{satorras2021n}. It shows that networks constructed from equivariant bases such as tensor field networks (TFNs) and SE(3)-equivariant transformers can be significantly slower than non-equivariant graph neural networks. 

\subsection{Additional results on second-order image classification}\label{app_subsec:additional_results_GRepsCNN}
\begin{figure}
    \centering
        \begin{subfigure}{0.45\textwidth}
      \centering
      \includegraphics[width=\textwidth]{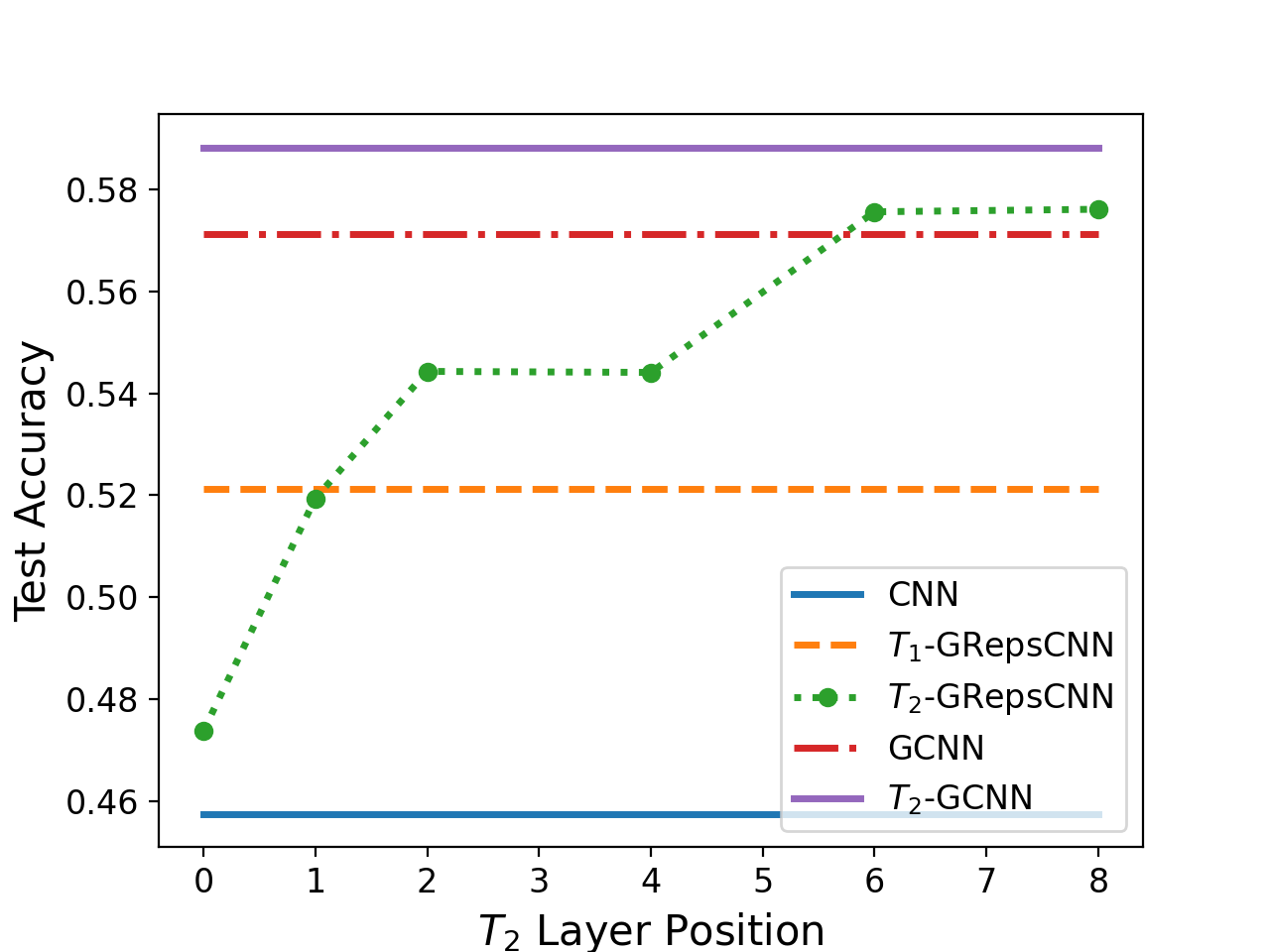}
      \caption{}
      \label{fig:t2_layer}
    \end{subfigure} \hspace{2mm}
        \begin{subfigure}{0.52\textwidth}
      \centering
      \includegraphics[width=\textwidth]{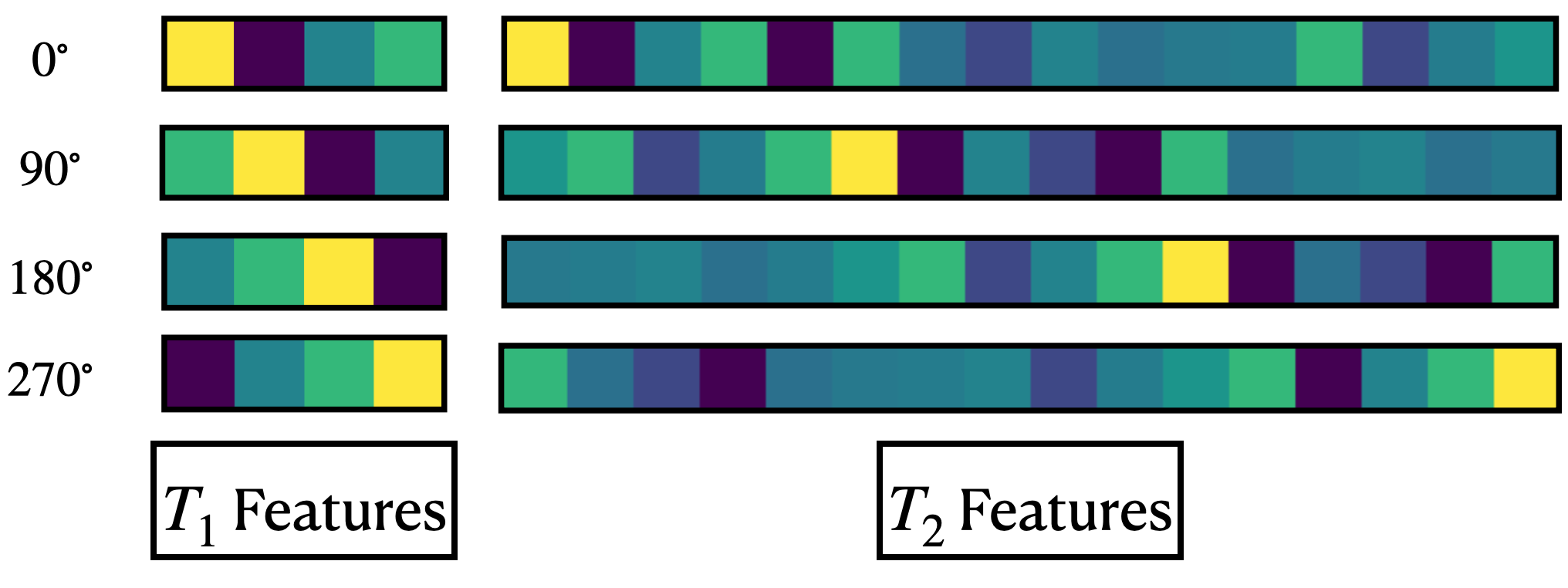}
      \caption{}
      \label{fig:t2_features_resnet}
    \end{subfigure}
    \caption{
    In (a), we analyze the performance of a rot90-equivariant CNN with 3 convolutional layers and 5 fully connected layers on rot90-CIFAR10. Here, $T_2$ representations are introduced in layer $i \in [1, \ldots, 8]$. We find that using $T_2$ representations in the final layers of the CNN easily outperforms non-equivariant CNNs as well as traditional equivariant representations with $T_1$ representations. (b) shows the $T_1$ and $T_2$ features obtained from one channel of a pretrained Resnet corresponding to $T_1$-equitune and $T_2$-equitune, respectively.
    }
    \label{fig:resnet_feature_comparison}
\end{figure}
Tab.~\ref{tab:e2cnn} shows that $T_2$-G-RepsCNN outperforms $T_1$-$E(2)$-CNN and that $T_2$-$E(2)$-CNN outperforms all the other models. Thus showing the importance of the use of higher-order tensors.
\begin{table}
\centering
\caption{$T_2$-$E(2)$CNN outperforms the traditional $E(2)$CNN. $T_2$-G-Reps$E(2)$-CNN outperforms the traditional $E(2)$CNN. $T_2$-GRepsCNN also perform competitively, showing the importance of the use of higher order tensors.
Table shows mean (std) of classification accuracies on Rot-CIFAR10 dataset (CIFAR10 with random rotations in (-180$^\circ$, +180$^{\circ}$]) for various GRepsCNNs and $E(2)$-CNNs with different group equivariances, and tensor orders. All models are trained for 10 epochs and results are over 3 fixed seeds.}
\label{tab:e2cnn}
\begin{tblr}{
  cells = {c,fg=black},
  vline{2} = {-}{},
  hline{1-2,11} = {-}{0.08em},
  hline{3,7} = {-}{0.05em},
}
Model              & Equivariance & Tensor Orders & Test Acc.           \\
CNN                & --           & --            & 43.2 (0.2)          \\
$T_1$-GRepsCNN  & C8           & ($T_1$)       & 48.1 (1.2)          \\
$T_2$-GRepsCNN  & C8           & ($T_1, T_2$)  & 53.1 (0.7)          \\
$E(2)$-CNN   & C8           & ($T_1$)       & 51.7 (1.8)          \\
$T_2$-G-Reps$E(2)$-CNN   & C8           & ($T_1, T_2$)  & \textbf{54.3 (1.2)} \\
$T_1$-GRepsCNN & C16          & ($T_1$)       & 48.4 (1.4)          \\
$T_2$-GRepsCNN & C16          & ($T_1, T_2)$  & 53.6 (0.7)          \\
$E(2)$-CNN  & C16          & ($T_1$)       & 50.6 (1.6)          \\
$T_2$-G-Reps$E(2)$-CNN  & C16          & ($T_1, T_2$)  & \textbf{53.7 (1.2)} 
\end{tblr}
\end{table}

\subsection{Additional results on solving PDEs using FNOs}\label{app_subsec:additional_results_FNO_time}
Tab.~\ref{tab:FNO_time} shows that G-RepsFNOs are much more computationally efficient than G-FNOs, while performing competitively to them.
\begin{table}
\centering
\caption{\textcolor{black}{GRpesFNOs are much cheaper than G-FNOs while giving competitive performance. Table shows mean forward time (in seconds) per epoch over 5 epochs for FNO, GRepsFNO,and G-FNO models on Navier-Stokes and Navier-Stokes-Symmetric datasets as described in Sec.~\ref{subsec:exp_FNOs}.}}
\label{tab:FNO_time}
\begin{tblr}{
  row{2} = {c},
  row{3} = {c},
  cell{1}{1} = {c},
  cell{1}{2} = {c},
  cell{1}{3} = {c},
  cell{1}{4} = {c},
  cell{2}{5} = {fg=black},
  cell{3}{5} = {fg=black},
  hlines,
  vline{2} = {-}{},
}
\textcolor{black}{Dataset} \textbackslash{} \textcolor{black}{Model} & \textcolor{black}{FNO}  & \textcolor{black}{G-RepsFNO}& \textcolor{red}{G-FNO} \\
\textcolor{black}{Navier-Stokes}                  & \textcolor{black}{49.8} & \textcolor{black}{53.9}               & \textcolor{red}{109.9} \\
\textcolor{black}{Navier-Stokes-Symmetric}        & \textcolor{black}{19.2} & \textcolor{black}{20.8}             & \textcolor{red}{43.8}  
\end{tblr}
\end{table}

\end{document}